\title{A faster and simpler algorithm for learning shallow networks}
\author{Sitan Chen\thanks{\texttt{sitan@seas.harvard.edu}. Supported by NSF Award \#2103300.} \\ UC Berkeley, Harvard \and Shyam Narayanan\thanks{\texttt{shyamsn@mit.edu}. Supported by an NSF Graduate Fellowship and a Google Fellowship.} \\ MIT}
\date{\today}
\renewcommand{\top}{\intercal}
\newcommand{\relu}{\mathsf{relu}}
\newcommand{\fwlamu}{f_{w,\mathbf{\lambda},\mathbf{u}}}
\renewcommand{\Id}{\mathds{1}}
\newcommand{\He}{\mathrm{He}}
\renewcommand{\epsilon}{\varepsilon}
\begin{document}

\maketitle

\begin{abstract}
    We revisit the well-studied problem of learning a linear combination of $k$ ReLU activations given labeled examples drawn from the standard $d$-dimensional Gaussian measure. Chen et al.~\cite{chen2023learning} recently gave the first algorithm for this problem to run in $\poly(d,1/\epsilon)$ time when $k = O(1)$, where $\epsilon$ is the target error. More precisely, their algorithm runs in time $(d/\epsilon)^{\mathrm{quasipoly}(k)}$ and learns over multiple stages. Here we show that a much simpler one-stage version of their algorithm suffices, and moreover its runtime is only $(d/\epsilon)^{O(k^2)}$.
\end{abstract}

\section{Introduction}

We consider the well-studied problem of PAC learning one-hidden-layer ReLU networks from Gaussian examples. In this problem, there are unknown \emph{weight vectors} $u_1,\ldots,u_k\in\bS^{d-1}$ and \emph{output weights} $\lambda_1,\ldots,\lambda_k\in\R$, and we are given labeled examples $(x_1,f(x_1)),\ldots,(x_N,f(x_N))$ for
\begin{equation}
    f(x) \triangleq \sum^k_{i=1} \lambda_i \,\relu(\iprod{u_i, x})\,,
\end{equation}
where $x_1,\ldots,x_N$ are drawn i.i.d. from the standard $d$-dimensional Gaussian measure $\gamma$. The goal is to output some estimator $\wh{f}$ for which 
\begin{equation}
    \norm{f - \wh{f}}_{L_2(\gamma)} \le \epsilon\,,
\end{equation}
for some target error $\epsilon$. In order for this to be scale-invariant, we adopt the standard (and necessary) normalization convention of assuming that $\sum_i |\lambda_i| \le \mathcal{R}$ for some parameter $\mathcal{R} \ge 1$.

This problem has been a fruitful testbed both for proving rigorous guarantees on training neural networks with gradient descent, and for developing new provably correct algorithms for nonconvex regression in high dimensions. 
While it has been the subject of a long line of work~\cite{janzamin2015beating,sedghi2016provable,bakshi2019learning,ge2018learning,ge2018learning2,sewoong,diakonikolas2020algorithms,zhang2016l1,goel2017reliably,Daniely17, goel2019learning,zhong2017recovery,LiY17,vempala2019gradient,zgu,soltanolkotabi2017learning,zhangps17,diakonikolas2020approximation,LiMZ20,convotron,azll,chen2022learning,smallcovers,chen2023learning}, it remains open to find a $\poly(k,d,\mathcal{R}/\epsilon)$ time algorithm for this problem without making any additional assumptions on the network parameters. For a more thorough overview of related work, we refer the reader to the discussion in~\cite{chen2022learning,chen2023learning}.

Recently, Chen et al.~\cite{chen2023learning} gave the first $\poly(d,\mathcal{R}/\epsilon)$ time algorithm for this problem in the regime where $k = O(1)$. Unfortunately, their dependence on $k$ was rather large, namely $(dk\radius/\epsilon)^{\mathrm{quasipoly}(k)}$. In this work, we obtain the following improvement:

\begin{theorem}\label{thm:main}
    There is an algorithm for PAC learning one-hidden-layer ReLU networks from Gaussian examples with runtime and sample complexity $(dk\mathcal{R}/\epsilon)^{O(k^2)}$.
\end{theorem}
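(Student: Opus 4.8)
\section*{Proof sketch}

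The plan is to reduce the $d$-dimensional problem to an $O(k)$-dimensional one and then brute-force. In one sentence: (i) use low-degree moments of $f$ to produce a subspace $\wh V\subseteq\R^d$ of dimension $O(k)$ such that $f$ is $O(\epsilon)$-close in $L_2(\gamma)$ to a one-hidden-layer network all of whose weight vectors lie in $\wh V$, and (ii) search over such networks with an $\epsilon$-net. The improvement over \cite{chen2023learning}, and the reason the algorithm is ``one-stage,'' is that a single moment computation produces $\wh V$ and a single net-based search produces the network, with no peeling of neurons or iterative refinement of the subspace.

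For (i), first pass to a canonical instance. Assuming $\norm{f}_{L_2(\gamma)}>\epsilon$ (else output $0$), discard every neuron with $|\lambda_i|\le\epsilon/k$ and merge each maximal cluster of neurons whose directions are pairwise within angle $\epsilon/(k\mathcal{R})$ into a single neuron (using $\norm{\relu(\iprod{u,\cdot})-\relu(\iprod{u',\cdot})}_{L_2(\gamma)}=O(\norm{u-u'})$ and $\relu(t)-\relu(-t)=t$ to absorb antipodal clusters). This perturbs $f$ by $O(\epsilon)$ and yields a network $f'$ with at most $k$ neurons, output weights of magnitude $\gtrsim\epsilon/k$, and pairwise direction separation $\gtrsim\epsilon/(k\mathcal{R})$. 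Next, from $(dk\mathcal{R}/\epsilon)^{O(k)}$ samples (truncating labels at magnitude $O(\mathcal{R}\sqrt{\log N})$ to control variance), estimate to additive accuracy $(\epsilon/(dk\mathcal{R}))^{O(k)}$ the Hermite moment tensors
\[
    T_m \triangleq \mathbb{E}_{x\sim\gamma}\big[f(x)\,\He_m(x)\big]=c_m\sum_{i=1}^{k}\lambda_i\,u_i^{\otimes m},\qquad m=1,\dots,D,
\]
for $D=O(k)$, where $c_m$ is the $m$-th Hermite coefficient of $\relu$ (nonzero for $m=1$ and all even $m$). Each $T_m$ is supported on $V^{\otimes m}$ for $V\triangleq\mathrm{span}(u_1,\dots,u_k)$, $\dim V\le k$; flatten each $T_m$ to a matrix, form the $d\times d$ PSD matrix $B:=\sum_{m\le D}(\text{flat }T_m)(\text{flat }T_m)^{\!\top}$ with an appropriate per-$m$ normalization, and let $\wh V$ be the span of the eigenvectors of the empirical $\wh B$ whose eigenvalue exceeds a threshold $\tau$. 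Since $\mathrm{rank}(B)\le k$ and $\mathrm{colspan}(B)\subseteq V$, taking $\tau$ above the sampling noise gives $\dim\wh V\le k$. The structural heart is to show $\wh V$ lies within $\epsilon/\mathcal{R}$ of every weight vector of $f'$, whence $\norm{f-f_{\wh V}}_{L_2(\gamma)}=O(\epsilon)$ by the Lipschitz bound above, where $f_{\wh V}$ denotes the conditional expectation of $f$ given the projection of $x$ onto $\wh V$. Writing $B|_V=U\Lambda K_D\Lambda U^{\!\top}$ with $U=[u_1\mid\cdots\mid u_k]$, $\Lambda=\mathrm{diag}(\lambda_i)$, and $K_D=[\psi_D(\iprod{u_i,u_j})]_{i,j}$ the PSD kernel matrix of a fixed degree-$O(k)$ polynomial $\psi_D$ with positive coefficients, this reduces to a generalized-Vandermonde conditioning bound $\sigma_{\min}(K_D)\ge(\text{separation}/k)^{O(k)}$, together with the fact that a unit $v$ with $v^{\!\top}Bv$ below threshold is a direction on which $f$ depends negligibly.

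For (ii), identify $\wh V$ with $\R^r$, $r=O(k)$, and enumerate an $\eta$-net over $k$-tuples of unit vectors in $\wh V$ and $k$ output weights in $[-\mathcal{R},\mathcal{R}]$ with $\eta=\poly(\epsilon/(k\mathcal{R}))$; this net has cardinality $(\mathcal{R}/\eta)^{O(rk)}=(k\mathcal{R}/\epsilon)^{O(k^2)}$. Estimate the empirical $L_2(\gamma)$-error of each candidate on fresh truncated samples and output the minimizer; a union bound over the net gives uniform convergence from $\poly(dk\mathcal{R}/\epsilon)$ samples, and by (i) together with Lipschitzness of $\relu$ the best candidate attains error $O(\epsilon)$. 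Rescaling $\epsilon$ proves the theorem; the running time and sample complexity are dominated by (ii), namely $(dk\mathcal{R}/\epsilon)^{O(k^2)}$.

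The main obstacle is the structural claim in (i): that thresholded PCA of only the degree-$O(k)$ moments suffices to recover a subspace capturing $f$ up to $\epsilon$. The difficulty is that nearly-coincident weight vectors force $K_D$ to be ill-conditioned, so one must balance three error scales --- the $O(\epsilon)$ error from merging near-coincident neurons, the $(\epsilon/(dk\mathcal{R}))^{O(k)}$-scale sampling error, and the useful spectral gap of $B$, which can be as small as $(\epsilon/(k\mathcal{R}))^{\Theta(k)}$ --- choosing the merging scale, the weight cutoff, and $\tau$ so that the recovered $\wh V$ is simultaneously $O(k)$-dimensional and close to the post-merging weight vectors, and verifying via the polynomial decay of the Hermite coefficients of $\relu$ that no nonnegligible part of the $L_2(\gamma)$-mass of $f$ escapes into directions invisible to the degree-$\le D$ moments. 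The remaining ingredients --- sample complexity of moment estimation, Davis--Kahan perturbation bounds, net cardinality, and uniform convergence --- are routine.
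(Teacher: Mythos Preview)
Your high-level plan---recover an $O(k)$-dimensional subspace from low-degree moments, then brute-force over a net---is exactly the paper's plan, and the net/validation step (ii) matches almost verbatim. The gap is entirely in the ``structural heart'' of (i), and it is a real one.

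You write $B|_V = U\Lambda K_D\Lambda U^\top$ with $U=[u_1\mid\cdots\mid u_k]$ and $K_D=[\psi_D(\langle u_i,u_j\rangle)]$ built from the \emph{original} neurons, and then invoke a Vandermonde-type bound $\sigma_{\min}(K_D)\ge(\text{separation}/k)^{O(k)}$. But the original $u_i$'s can be arbitrarily close, so this bound is vacuous for $K_D$ as written. If instead you mean to apply the bound to the merged network $f'$, you run into a scale mismatch: the moments you actually estimate are those of $f$, and $\|T_m-T_m'\|_F\lesssim\|f-f'\|_{L_2}=\Theta(\epsilon)$ propagates to $\|B-B'\|_{\op}=\Theta(k\mathcal{R}\epsilon)$. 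Your claimed signal, on the other hand, is $\sigma_{\min}(K_D')\cdot\min_j(\lambda_j')^2=(\epsilon/(k\mathcal{R}))^{\Theta(k)}$, which is far smaller than $\Theta(\epsilon)$. So thresholded PCA at any level that retains the $f'$-directions will also retain this $\Theta(\epsilon)$-scale perturbation, and you lose control of both $\dim\wh V$ and the distance from $\wh V$ to the weight vectors of $f'$. Shrinking the merging scale to beat this only pushes the separation down, which in turn shrinks $\sigma_{\min}(K_D')$ even faster; chasing this gives you $(\epsilon/k\mathcal{R})^{O(k^2)}$-type quantities anyway, and the $D=O(k)$ claim collapses. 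Separately, the assertion ``a unit $v$ with $v^\top Bv$ below threshold is a direction on which $f$ depends negligibly'' is not justified: $v^\top Bv$ sees only degree-$\le D$ Hermite mass, and you have not argued the higher-degree mass is controlled in the relevant directions.

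The paper sidesteps all of this by never comparing $B$ to a merged $B'$. Instead it contracts each $T_\ell$ along a random $g$ to matrices $M_\ell=\sum_i\lambda_i\langle u_i,g\rangle^{\ell-2}u_iu_i^\top$, and the central new ingredient (Lemma~\ref{lem:poly}) is an explicit polynomial $p$ of degree $\le k^2$ with coefficients bounded by $(1/\Delta)^{O(k^2)}$ such that $p(z_i^2)=\bone{i\in I_j}$ for any single cluster $I_j$ at a \emph{fixed, algorithm-chosen} scale $\Delta=\poly(\epsilon/(k\mathcal{R}d))$. Then $\sum_\ell p_\ell M_{2+2\ell}=\sum_{i\in I_j}\lambda_i u_iu_i^\top$ exactly---no merging error at all---and the only slack comes from moment estimation, which can be driven to $\Delta^{O(k^2)}$ with $(dk\mathcal{R}/\epsilon)^{O(k^2)}$ samples. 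This bounded-coefficient polynomial is precisely the device that lets one balance the three scales you identify as the obstacle; it is the reason the paper needs $O(k^2)$ moments rather than $O(k)$, and it is what your sketch is missing.
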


\noindent In~\cite{diakonikolas2020algorithms} (see also~\cite{goel2020superpolynomial}), it was shown, roughly speaking, that any \emph{correlational statistical query} algorithm time at least $d^{\Omega(k)}$ to learn to constant error in this setting. In particular, this lower bound applies to the standard approach in practice of running noisy gradient descent with respect to square loss. The algorithm we use is also a correlational statistical query algorithm, and our Theorem~\ref{thm:main} can thus be interpreted as saying that the lower bound of~\cite{diakonikolas2020algorithms} is qualitatively tight, up to the particular polynomial dependence on $k$ in the exponent. In fact, as we discuss in Remark~\ref{remark:better_degree} at the end of \S\ref{sec:algo_and_analysis}, for the specific hard instance constructed in the lower bound of~\cite{diakonikolas2020algorithms}, the dependence on $k$ in the exponent that our algorithm achieves can actually be improved from quadratic to linear.

\subsection{Comparison to~\cite{chen2023learning}.} 

Our algorithm can be thought of as a simplification of the algorithm proposed by~\cite{chen2023learning} in the following sense. 

The starting point for their algorithm was to form empirical estimates of the moment tensors $T_\ell = \sum_i \lambda_i u_i^{\otimes \ell}$ for various choices of $\ell$ and contract these along a random direction $g\in\bS^{d-1}$ into matrices $M_\ell \triangleq T_\ell(g,\ldots,g,:,:) = \sum_i \lambda_i \iprod{u_i,g}^{\ell - 2} u_iu_i^\top$. Intuitively, these matrices constitute different linear combinations of the projectors $u_iu_i^\top$, and if we take enough different choices of $\ell$, these matrices will collectively span the subspace $\mathrm{span}(u_1u_1^\top,\ldots,u_ku_k^\top)$. So in principle, by taking a suitable linear combination $\sum_\ell \alpha_\ell M_\ell$ of these matrices and computing its top-$k$ singular subspace, we can get access to the subspace spanned by the weight vectors, and then exhaustively enumerate over an epsilon-net over this to find a suitable approximation to the unknown function $f$. 

Unfortunately, as noted in~\cite{chen2023learning}, there are a host of technical hurdles that arise in implementing this strategy, because there might be some weight vectors that are arbitrarily close to each other. \emph{A priori}, this means that for any suitable choice of coefficients $\brc{\alpha_\ell}$, some of the $\alpha_\ell$'s would have to be very (in fact, arbitrarily) large, which would require estimating the moment tensors to arbitrarily small precision. 

Their workaround was to argue that if one takes the top-$k$ singular subspaces of sufficiently many different $M_\ell$'s and computes their joint span $U$, this space is guaranteed to contain a few weight vectors. One can then subtract these from the unknown function and recurse. Unfortunately, the error in estimating weight vectors in each stage of their algorithm compounds exponentially in $k$, and under their analysis, $\Theta(\log k)$ rounds of recursion are needed, which ultimately leads to their $(d/\epsilon)^{\mathrm{quasipoly}(k)}$ runtime.

In the present work, we show that this multi-stage approach is unnecessary, and in fact all of the information needed to reconstruct $f$ is present in the subspace $U$ computed in the first round of their algorithm.\footnote{In fact, we show that it is present even in a certain low-dimensional approximation to this subspace. For technical reasons, it is essential to work with this approximation instead of the full subspace in order to get the claimed $(k/\epsilon)^{k^2}$ dependence in Theorem~\ref{thm:main}, as opposed to a $(k/\epsilon)^{k^4}$ dependence.} The central ingredient in our analysis is a univariate polynomial construction (Lemma~\ref{lem:poly}) that 
shows, roughly speaking, that for any cluster $S\subseteq[k]$ of weight vectors which are $\poly(1/d,1/k,\epsilon/\radius)$-far from all other weight vectors, there exists a linear combination of $M_\ell$'s which is equal to $\sum_{i\in S} \lambda_i u_i u_i^\top$. Crucially, the coefficients in this linear combination can be upper bounded by a quantity depending only on $d,k,\radius/\epsilon$ and not on the distances between the weight vectors. These linear combinations certify that $U$ contains a vector close to each such cluster (Corollary~\ref{cor:ui-close-to-V}), and it is not hard to show (Lemma~\ref{lem:clusters_suffice}) that these vectors are enough to approximate $f$.

\section{Preliminaries}

\paragraph{Notation.} 
Given a positive integer $k$, we use $[k]$ to denote the set of integers $\{1, 2, \dots, k\}$. Likewise, given two positive integers $b \ge a$, we use $[a:b]$ to denote the set of integers $\{a, a+1, \dots, b\}$.

Given functions $a, b: \R_{\ge 0} \to \R_{\ge 0}$, we use $a = O(b)$ and $a \lesssim b$ interchangeably to denote that there exists an absolute constant $C$ such that $a(z) \le C \cdot b(z)$ for all $z$ sufficiently large. 

Given any function $f: \BR^d \to \BR$ and a distribution $\gamma$ over $\BR^d$, we write $\norm{f}_{L_2(\gamma)} = \sqrt{\BE_{x \sim \gamma}[f(x)^2]}$, recalling that $\gamma$ denotes the standard $d$-dimensional normal distribution $\cN(0, \Id)$.

We will always use $u_i$ to denote a vector in the $d$-dimensional unit sphere $\bS^{d-1}$, and $\lambda_i, \mu_i$ to denote real-valued scalars.

For a vector $v$, we use $\|v\|$ to denote its $\ell_2$ norm (or Euclidean norm), and $\|v\|_1$ to denote its $\ell_1$ norm.
Given a real symmetric matrix $M$, we use $\|M\|_{\op}$ to denote its operator norm, and $\|M\|_F$ to denote its Frobenius norm.

\subsection{ReLU networks}

\begin{lemma}[Lemma 2.1 in \cite{chen2023learning}]\label{lem:abs}
    Given  $f = \sum^k_{i=1} \mu_i\,\relu(\iprod{u_i,\cdot})$, there exist $w\in\R^d$ and $\lambda_1,\ldots,\lambda_k\in\R$ such that $f = \iprod{w,\cdot} + \sum^k_{i=1} \lambda_i\,|\iprod{u_i,\cdot}|$.
\end{lemma}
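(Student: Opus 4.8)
The plan is to invoke the elementary pointwise identity $\relu(t) = \tfrac{1}{2}(t + |t|)$, valid for every $t \in \R$, and apply it coordinatewise to each of the $k$ terms of $f$. Concretely, for each $i$ and each $x$ we have $\relu(\iprod{u_i,x}) = \tfrac12\iprod{u_i,x} + \tfrac12|\iprod{u_i,x}|$, so summing against the weights $\mu_i$ gives
\begin{equation}
    f(x) = \sum_{i=1}^k \mu_i\,\relu(\iprod{u_i,x}) = \tfrac12\sum_{i=1}^k \mu_i\iprod{u_i,x} + \tfrac12\sum_{i=1}^k \mu_i|\iprod{u_i,x}|\,.
\end{equation}

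Next I would collect the first sum into a single linear functional using linearity of the inner product: $\sum_i \mu_i \iprod{u_i,x} = \iprod{\sum_i \mu_i u_i,\, x}$. Setting $w \triangleq \tfrac12\sum_{i=1}^k \mu_i u_i \in \R^d$ and $\lambda_i \triangleq \mu_i/2 \in \R$ then yields exactly $f = \iprod{w,\cdot} + \sum_{i=1}^k \lambda_i |\iprod{u_i,\cdot}|$, which is the claimed form. Since the identity holds for every $x$, this is an identity of functions, as required.

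There is essentially no obstacle here: the only thing to verify is the scalar identity $\relu(t) = \tfrac12(t+|t|)$, which follows by checking the two cases $t \ge 0$ and $t < 0$ separately. It is worth remarking that the decomposition is of course not unique (one can trade the linear part against relations among the $u_i$ when they are linearly dependent, and flipping the sign of a $u_i$ flips the sign of the corresponding $\lambda_i$ while leaving $|\iprod{u_i,\cdot}|$ unchanged), but the lemma only asserts existence, so this causes no issue. This reparametrization is the reason the rest of the paper may work with the ``absolute value'' representation $f = \iprod{w,\cdot} + \sum_i \lambda_i|\iprod{u_i,\cdot}|$ rather than the ReLU representation, which is convenient because $|\iprod{u_i,\cdot}|$ is an even function and thus interacts cleanly with the even Hermite moment tensors used later.
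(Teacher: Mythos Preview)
Your argument is correct: the identity $\relu(t) = \tfrac12(t + |t|)$ immediately gives the claimed decomposition with $w = \tfrac12\sum_i \mu_i u_i$ and $\lambda_i = \mu_i/2$. Note that the present paper does not actually prove this lemma---it is quoted verbatim as Lemma~2.1 of~\cite{chen2023learning}---so there is no in-paper proof to compare against; your approach is the standard one and is almost certainly what the cited reference does as well.
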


\noindent In light of Lemma~\ref{lem:abs}, given $w\in\R^d$ and $(\lambda_1, u_1),\ldots(\lambda_k,u_k) \in \R\times\bS^{d-1}$, let
\begin{equation}
    \fwlamu(x) \triangleq \iprod{w,x} + \sum^k_{i=1} \lambda_i\,|\iprod{u_i,x}|\,.
\end{equation}

\noindent We will need the following elementary bound relating parameter closeness to closeness in $L_2(\gamma)$ for such functions.

\begin{proposition}[Lemma 3.3, restated, in~\cite{chen2021efficiently}] \label{prop:relu-diff-bound}
    For $x \sim \gamma$ and any unit vectors $u,u'$, 
    \begin{equation}
        \E{(|\langle u, x \rangle| - |\langle u', x \rangle|)^2} \lesssim \|u-u'\|^2\,.
    \end{equation}
\end{proposition}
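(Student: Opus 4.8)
The plan is to observe that this follows immediately from the reverse triangle inequality together with the rotation-invariance of the Gaussian, so that no nontrivial work is required. Concretely, for any reals $s,t$ we have $\bigl||s| - |t|\bigr| \le |s - t|$, and hence $(|s| - |t|)^2 \le (s-t)^2$. I would apply this pointwise with $s = \langle u, x\rangle$ and $t = \langle u', x\rangle$ to get
\[
  \BE_{x\sim\gamma}\bigl[(|\langle u, x\rangle| - |\langle u', x\rangle|)^2\bigr] \;\le\; \BE_{x\sim\gamma}\bigl[\langle u - u', x\rangle^2\bigr].
\]
Since $x \sim \cN(0,\Id)$, the scalar $\langle u - u', x\rangle$ is distributed as $\cN(0, \|u-u'\|^2)$, so the right-hand side equals exactly $\|u-u'\|^2$. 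This establishes the claimed bound, in fact with absolute constant $1$, so the $\lesssim$ in the statement is not constant-critical.

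If one wanted to pin down the sharp constant, I would instead reduce to a two-dimensional computation: the pair $(\langle u, x\rangle, \langle u', x\rangle)$ is a centered bivariate Gaussian with unit marginals and correlation $\rho = \langle u, u'\rangle$, and $\|u - u'\|^2 = 2(1-\rho)$. Using the classical identity $\BE[|a|\,|b|] = \frac{2}{\pi}\bigl(\sqrt{1-\rho^2} + \rho\arcsin\rho\bigr)$ for such a pair, one obtains $\BE[(|a|-|b|)^2] = 2 - \frac{4}{\pi}\bigl(\sqrt{1-\rho^2} + \rho\arcsin\rho\bigr)$, and a short Taylor expansion around $\rho = 1$ shows this is $(1+o(1))\,\|u-u'\|^2$ as $u' \to u$, so the constant $1$ coming from the first argument is essentially tight.

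There is essentially no obstacle here: the only idea needed is to invoke $\bigl||s|-|t|\bigr| \le |s-t|$ to discard the absolute values \emph{before} taking expectations, after which the estimate is an exact Gaussian second-moment computation. The one thing to be mildly careful about is that this must be the reverse triangle inequality — a naive bound such as $(|s|-|t|)^2 \le 2s^2 + 2t^2$ would only yield a dimension-free $O(1)$ rather than the required $O(\|u-u'\|^2)$ — but with the correct inequality in hand the proof is a single line.
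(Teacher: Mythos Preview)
Your proof is correct: the reverse triangle inequality $\bigl||s|-|t|\bigr|\le|s-t|$ applied pointwise, followed by the exact Gaussian second moment $\BE[\langle u-u',x\rangle^2]=\|u-u'\|^2$, gives the bound with constant $1$. There is nothing to compare against in the present paper, however, since the paper does not supply its own proof of this proposition; it simply quotes the result from~\cite{chen2021efficiently}. Your argument is a clean, self-contained replacement for that citation, and the additional remark about the sharp constant via the bivariate formula is a nice bonus but not needed for anything downstream.
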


\subsection{Moment tensors} \label{subsec:moment-tensors}

\noindent Given $g\in\bS^{d-1}$ and $\ell\in\mathbb{N}$, define
\begin{equation}
    T_\ell(\brc{\lambda_i, u_i}) \triangleq \sum^k_{i=1} \lambda_i u_i^{\otimes \ell} \qquad \text{and} \qquad M^g_\ell(\brc{\lambda_i, u_i}) \triangleq \sum^k_{i=1} \lambda_i \iprod{u_i, g}^{\ell - 2} u_i u_i^\top\,, \label{eq:moments}
\end{equation}
noting that the latter can be obtained by contracting the former along the direction $g$ in the first $\ell - 2$ modes, which we denote by $M^g_\ell(\brc{\lambda_i, u_i}) = T_\ell(\brc{\lambda_i, u_i})(g,\cdots,g,:,:)$. When $g$ and $\brc{\lambda_i, u_i}$ are clear from context, we denote these by $T_\ell$ and $M_\ell$ respectively.

These objects can be estimated from samples as follows. Let $\He_\ell(\cdot)$ denote the degree-$\ell$ probabilist's Hermite polynomial. The polynomials $\brc{\frac{1}{\sqrt{\ell!}}\He_\ell}_{\ell \ge 1}$ form an orthonormal basis for the space of functions which are square-integrable with respect to $\gamma$. Define the \emph{normalized Hermite tensor} $S_\ell: \R^d\to(\R^d)^{\otimes\ell}$ to be the tensor-valued function whose $(i_1,\ldots,i_\ell)$-th entry, given input $x\in\R^d$, is $\prod^d_{j=1} \frac{1}{\sqrt{\ell_j!}}\He_{\ell_j}(x_j)$, where $\ell_j$ is the number of occurrences of $j$ within $(i_1,\ldots,i_\ell)$.


\begin{lemma}[Lemma 4.2 in~\cite{chen2023learning}]\label{lem:moment}
    Let $\ell \in \brc{1,2,4,6,\ldots}$ and 
    \begin{equation}
        C_\ell \triangleq \begin{cases}
            1/2 & \text{if} \ \ell = 1 \\
            \frac{\He_\ell(0) + \ell \He_{\ell-2}(0)}{\sqrt{2\pi \ell!}} & \text{if} \ \ell \ \text{even}
        \end{cases}
    \end{equation}
    Let $\eta > 0$. Given samples $\brc{(x_i,\fwlamu(x_i)}_{i\in 1,\ldots,N}$ for $x_i \sim \gamma$ and $N \ge \ell^{O(\ell)}d^{2\ell} \radius^2 / \eta^2$, with high probability the tensor 
    \begin{equation}
        \wh{T} = \frac{1}{2C_\ell N} \sum_i \fwlamu(x_i) \cdot S_\ell(x_i)    
    \end{equation}
    satisfies $\norm{\wh{T} - T_\ell}_F \le \eta$ if $\ell$ is even, and otherwise satisfies $\norm{\wh{T} - w}_2 \le \eta$ if $\ell = 1$. In particular, for even $\ell$, if we define $\wh{M}^g_\ell \triangleq \wh{T}_\ell(g,\cdots,g,:,:)$ then $\norm{\wh{M}^g_\ell - M^g_\ell}_F \le \eta$.
\end{lemma}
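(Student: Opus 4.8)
The plan is to establish two things separately --- that the estimator $\wh{T}$ is unbiased (with mean $T_\ell$ for even $\ell$ and mean $w$ for $\ell=1$), and that it concentrates around its mean at the stated sample complexity --- and then deduce the bound on $\wh{M}^g_\ell$ by a contraction estimate. For unbiasedness, I would expand $\BE[\wh{T}]$ by linearity using $\fwlamu(x) = \iprod{w,x} + \sum_i \lambda_i |\iprod{u_i,x}|$. The degree-one term $\BE[\iprod{w,x}\, S_\ell(x)]$ equals $w$ when $\ell = 1$ (since $S_1(x) = x$ and $\BE[x\,x^\top] = \Id$) and vanishes when $\ell \ge 2$ by orthogonality of Hermite tensors of distinct degrees; since $2C_1 = 1$ and each $|\iprod{u_i,\cdot}|$ is an even function (hence has no degree-one Hermite component), this already settles the $\ell = 1$ case.

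For even $\ell \ge 2$, the heart of the argument is the identity, valid for any univariate $h$ and any unit vector $u$,
\[ \BE_{x\sim\gamma}\bigl[\,h(\iprod{u,x})\, S_\ell(x)\,\bigr] \;=\; \Bigl(\tfrac{1}{\sqrt{\ell!}}\,\BE_{g\sim\cN(0,1)}[\,h(g)\He_\ell(g)\,]\Bigr)\, u^{\otimes\ell}, \]
which encodes the fact that a ridge function $h(\iprod{u,\cdot})$ depends only on the one-dimensional projection $\iprod{u,\cdot}$, so its degree-$\ell$ Hermite component is a scalar multiple of $u^{\otimes\ell}$ with coefficient equal to the $\ell$-th normalized Hermite coefficient of $h$. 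Taking $h(\cdot) = |\cdot|$, I then evaluate $\BE_g[|g|\He_\ell(g)]$ using the three-term recurrence $g\He_\ell(g) = \He_{\ell+1}(g) + \ell\He_{\ell-1}(g)$, which reduces it to $\BE_g[\mathrm{sgn}(g)\He_{\ell+1}(g)] + \ell\,\BE_g[\mathrm{sgn}(g)\He_{\ell-1}(g)]$, together with the identity $\BE_g[\mathrm{sgn}(g)\He_m(g)] = \tfrac{2}{\sqrt{2\pi}}\He_{m-1}(0)$ for odd $m$ (obtained by integrating by parts against the Gaussian density, i.e.\ the distributional fact $(|\cdot|)'' = 2\delta_0$). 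This gives $\BE_g[|g|\He_\ell(g)] = \tfrac{2}{\sqrt{2\pi}}\bigl(\He_\ell(0) + \ell\He_{\ell-2}(0)\bigr) = 2C_\ell\sqrt{\ell!}$, hence $\BE[|\iprod{u,x}|\,S_\ell(x)] = 2C_\ell\, u^{\otimes\ell}$, and summing over $i$ yields $\BE[\wh{T}] = \sum_i \lambda_i u_i^{\otimes\ell} = T_\ell$.

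For concentration, each distinct entry of $\wh{T}$ is an empirical average of $N$ i.i.d.\ copies of $Y := \tfrac{1}{2C_\ell}\fwlamu(x)\,(S_\ell(x))_{i_1\cdots i_\ell}$, and I would bound $\mathrm{Var}(Y) \le \BE[Y^2] \le \tfrac{1}{(2C_\ell)^2}\sqrt{\BE[\fwlamu(x)^4]}\,\sqrt{\BE[(S_\ell(x))_{i_1\cdots i_\ell}^4]}$ by Cauchy--Schwarz. Here $\BE[\fwlamu(x)^4] \lesssim \radius^4$ by the triangle inequality in $L_4(\gamma)$ (using $\norm{w},\sum_i|\lambda_i| = O(\radius)$ and that $\iprod{v,x}$, $|\iprod{v,x}|$ have $O(1)$ fourth moments for unit $v$); $\BE[(S_\ell(x))_{i_1\cdots i_\ell}^4] \le 9^\ell$ by Gaussian hypercontractivity, since each entry of $S_\ell$ is a polynomial of degree at most $\ell$ with $L_2(\gamma)$-norm at most $1$; and $1/C_\ell^2 \le 2\pi\,\ell! = \ell^{O(\ell)}$, since $|\He_{\ell-2}(0)| \ge 1$ for even $\ell$. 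Thus $\mathrm{Var}(Y) \le \ell^{O(\ell)}\radius^2$, so by Chebyshev and a union bound over the at most $d^\ell$ distinct entries at per-entry precision $\eta\, d^{-\ell/2}$, taking $N \ge \ell^{O(\ell)} d^{2\ell}\radius^2/\eta^2$ gives $\norm{\wh{T} - T_\ell}_F^2 = \sum_{i_1,\ldots,i_\ell}(\wh{T}_{i_1\cdots i_\ell} - (T_\ell)_{i_1\cdots i_\ell})^2 \le \eta^2$ with high probability; the $\ell = 1$ case is identical with $T_\ell$ replaced by $w$. Finally, for any unit vector $g$, contracting a tensor $A \in (\R^d)^{\otimes\ell}$ in its first $\ell-2$ modes along $g$ is non-expansive in Frobenius norm (by Cauchy--Schwarz, $\norm{A(g,\cdots,g,:,:)}_F \le \norm{A}_F\,\norm{g}^{\ell-2} = \norm{A}_F$), so applying this to $A = \wh{T}_\ell - T_\ell$ gives $\norm{\wh{M}^g_\ell - M^g_\ell}_F \le \norm{\wh{T}_\ell - T_\ell}_F \le \eta$.

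The only genuinely delicate point is pinning down the exact constant: showing that $\BE[|\iprod{u,x}|\,S_\ell(x)]$ equals \emph{precisely} $2C_\ell\, u^{\otimes\ell}$ with $C_\ell$ as defined. This requires care with the normalization of the Hermite tensor (so that the degree-$\ell$ component of $h(\iprod{u,\cdot})$ is exactly $\tfrac{1}{\sqrt{\ell!}}\BE_g[h(g)\He_\ell(g)]\cdot u^{\otimes\ell}$) and with the distributional evaluation of $\BE_g[|g|\He_\ell(g)]$; the apparent discrepancy between the two forms $\He_{\ell-2}(0)$ and $\He_\ell(0) + \ell\He_{\ell-2}(0)$ for a multiple of this quantity is reconciled by the Hermite recurrence $\He_\ell(0) = -(\ell-1)\He_{\ell-2}(0)$. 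Everything else --- linearity, orthogonality of Hermite tensors across degrees, the fourth-moment and hypercontractivity estimates, and the contraction bound --- is routine. (I note in passing that the $d^{2\ell}$ in the sample bound, rather than $d^{\ell}$, is an artifact of the crude per-entry union bound; a direct second-moment estimate on $\BE[\norm{\wh{T}-T_\ell}_F^2]$ would improve it, but this is immaterial for the downstream results.)
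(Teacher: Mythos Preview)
The paper does not prove this lemma; it is quoted verbatim as Lemma~4.2 of~\cite{chen2023learning} and used as a black box. So there is no ``paper's own proof'' to compare against here.

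That said, your sketch is a correct and standard proof of the result. The unbiasedness computation is right: the key identity $\BE_{x\sim\gamma}[h(\iprod{u,x})\,S_\ell(x)] = \bigl(\tfrac{1}{\sqrt{\ell!}}\BE_{g}[h(g)\He_\ell(g)]\bigr)\,u^{\otimes\ell}$ is the usual Hermite expansion of a ridge function, and your evaluation of $\BE_g[|g|\He_\ell(g)]$ via the three-term recurrence and Rodrigues' formula (equivalently, $(|\cdot|)'' = 2\delta_0$) checks out and recovers exactly $2C_\ell\sqrt{\ell!}$. The concentration argument via per-entry variance, Cauchy--Schwarz plus hypercontractivity, and a Chebyshev/union bound is crude but valid for ``high probability'' interpreted as a fixed constant; it also explains the $d^{2\ell}$ rather than $d^\ell$ in the stated sample bound, as you note. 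The final contraction step is immediate.

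One minor caution: a Chebyshev tail with a union bound over $d^\ell$ events only yields constant success probability, not $1-o(1)$. If the downstream analysis needs a stronger guarantee (it does not here, since the algorithm's overall success probability is only required to be a constant), one would instead bound a higher moment or apply a median-of-means trick. This does not affect the lemma as stated.
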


\noindent When $g$ and $N$ are clear from context, we will use $\wh{M}_\ell$ to refer to the empirical estimate $\wh{M}^g_\ell$ which is obtained using $N$ samples. We will also use $\wh{w}$ to refer to $\wh{T}$ when $\ell = 1$ to emphasize that it is an empirical estimate of the linear component $w$ in $\fwlamu$.

\subsection{Random contraction}

As in~\cite{chen2023learning}, our algorithm is based on extracting information about the parameters of the network from $\brc{M^g_\ell}$ for a random choice of unit vector $g$. The randomness in $g$ ensures that with high probability, any two weight vectors $u_i, u_j$ are close/far if and only if their projections $\iprod{u_i,g}, \iprod{u_j,g}$ are as well. Formally:

\begin{lemma}[Lemma 2.2 in~\cite{chen2023learning}]\label{lem:anti}
    With probability at least $4/5$ over random $g\in\bS^{d-1}$, for all $i,j$ and $\sigma\in\brc{\pm 1}$,
    \begin{equation}
        \frac{c}{\sqrt{d}}\cdot \frac{1}{k^2} \le \frac{|\iprod{u_i + \sigma u_j, g}|}{\norm{u_i + \sigma u_j}} \le \frac{c'}{\sqrt{d}}\cdot \sqrt{\log k}
    \end{equation}
    for some absolute constants $c,c' > 0$.
\end{lemma}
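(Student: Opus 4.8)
The plan is to draw $g$, regard it as fixed, and reduce the lemma by a union bound to a two-sided estimate on the projection of a single fixed unit vector onto $g$. Note that the ratio in the statement is exactly $|\iprod{v,g}|$ for the unit vector $v \triangleq (u_i + \sigma u_j)/\norm{u_i + \sigma u_j}$ (whenever the denominator is nonzero), and that there are at most $2\binom{k}{2} + k \le k^2$ distinct such vectors. So it suffices to show: for a single fixed unit vector $v$, the bound $\tfrac{c}{k^2\sqrt d} \le |\iprod{v,g}| \le \tfrac{c'\sqrt{\log k}}{\sqrt d}$ fails with probability at most $\tfrac{1}{20k^2}$ conditional on one high-probability event $E$ (defined below) that depends only on $g$; a union bound over the $\le k^2$ vectors, together with $\Pr[\bar E]$, then yields overall failure probability $\le \tfrac{1}{20} + \Pr[\bar E] \le \tfrac15$. (Throughout, $k\ge2$, so that $\sqrt{\log k}$ is a meaningful bound; $k=1$ is trivial.)

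To handle a fixed $v$, I would write $g = h/\norm{h}$ with $h \sim \cN(0,\Id)$, so that $\iprod{v,g} = Z/\norm{h}$ where $Z \triangleq \iprod{v,h} \sim \cN(0,1)$. Let $E \triangleq \{\tfrac12\sqrt{d} \le \norm{h} \le 2\sqrt{d}\}$; standard $\chi^2$ concentration gives $\Pr[\bar E] \le e^{-\Omega(d)}$, which is below $\tfrac1{10}$ once $d$ exceeds an absolute constant (the finitely many smaller values of $d$ are dispatched directly, using $|\iprod{v,g}| \le \norm{v}\norm{g} = 1$ for the upper bound and the fact that a single coordinate of a uniform point on $\bS^{d-1}$ has density $O(\sqrt d)$ near $0$ for the lower bound). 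On $E$, the desired two-sided bound on $|\iprod{v,g}|$ holds as soon as $\tfrac{2c}{k^2} \le |Z| \le \tfrac{c'}{2}\sqrt{\log k}$.

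What remains is pure Gaussian concentration and anti-concentration of $Z$, which is marginally $\cN(0,1)$ and no longer references $E$. For the upper side, writing $t \triangleq \tfrac{c'}{2}\sqrt{\log k}$ we have $\Pr[|Z| > t] \le 2\exp(-t^2/2)$, which is $\le \tfrac{1}{40k^2}$ once $c'$ is a large enough absolute constant. For the lower side, the density of $|Z|$ is at most $\sqrt{2/\pi} < 1$ everywhere, so $\Pr[|Z| < \tfrac{2c}{k^2}] \le \tfrac{2c}{k^2}$, which is $\le \tfrac{1}{40k^2}$ once $c$ is a small enough absolute constant. Adding these two bounds gives the claimed $\tfrac{1}{20k^2}$ per-vector failure probability on $E$, and the union bound over all $\le k^2$ vectors together with $\Pr[\bar E]$ finishes the proof.

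I do not expect a genuine obstacle here; this is a routine appeal to concentration and anti-concentration of a Gaussian. The one point requiring a little care is calibrating the thresholds to the $\Theta(k^2)$-fold union bound: $\sqrt{\log k}$ is the natural scale above which a $\Theta(k^2)$-term union of Gaussian upper tails still leaves $O(1)$ room, and $1/k^2$ (a bit generously — $1/k$ would work with slightly more effort) plays the same role for the lower tail. The only other wrinkle is that $\iprod{v,g}$ is not literally Gaussian, which is handled cleanly by the representation $g = h/\norm{h}$ and peeling off the event $\norm{h} = \Theta(\sqrt d)$, with small $d$ treated separately and trivially.
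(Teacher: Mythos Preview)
Your proof is correct. The paper does not actually prove this lemma; it is quoted from~\cite{chen2023learning} and used as a black box, so there is no in-paper argument to compare against. Your approach---writing $g = h/\norm{h}$ with $h\sim\cN(0,\Id)$, conditioning on $\norm{h} = \Theta(\sqrt d)$, and then combining Gaussian upper-tail and anti-concentration bounds with a union bound over the at most $k^2$ relevant unit vectors---is exactly the standard proof one would expect behind such a citation, and all the steps check out.
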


\noindent Henceforth, we condition on the event that $g$ satisfies Lemma~\ref{lem:anti}. We will denote
\begin{equation}
    z_i \triangleq \iprod{u_i, g}
\end{equation}
and, because of the absolute values in the definition of $\fwlamu$, we may assume without loss of generality that
\begin{equation}
    0 \le z_1 \le \cdots \le z_k\,.
\end{equation}

\subsection{Estimating test error}

Our algorithm will produce a list of many candidate estimates, at least one of which is guaranteed to be sufficiently close in $L_2(\gamma)$ to $\fwlamu$. In order to identify an estimate from the list with this property, we use the following standard result on validation:

\begin{lemma}\label{lem:validate}
    Let $F: \R^d\to\R$ be a $2\radius$-Lipschitz one-hidden-layer ReLU network of size at most $2k$. Let $\delta > 0$, and let $\upsilon > 0$ be a parameter satisfying $\upsilon^2 < 4\mathcal{R}^2 k$. Given $N \gtrsim k^2(\radius / \upsilon)^4 \log(1/\delta)$ samples $x_1,\ldots,x_N\sim\gamma$, we have that
    \begin{equation}
        \Bigl|\E[\gamma]{F^2} - \frac{1}{N}\sum^N_{i=1} F(x_i)^2\Bigr| \le \upsilon^2.
    \end{equation}
\end{lemma}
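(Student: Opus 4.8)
The plan is to prove Lemma~\ref{lem:validate} by a direct application of a concentration inequality to the i.i.d.\ random variables $Y_i \triangleq F(x_i)^2$, so that the empirical average $\frac{1}{N}\sum_i Y_i$ concentrates around its mean $\E[\gamma]{F^2}$ to within additive error $\upsilon^2$. Since $\E[Y_i] = \E[\gamma]{F^2}$, the statement is exactly a bound on the deviation of a sample mean, so I need to (i) control the relevant moment(s) of $Y_i$ and (ii) invoke the appropriate tail bound, keeping track of the sample complexity $N \gtrsim k^2(\radius/\upsilon)^4\log(1/\delta)$.

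First I would estimate the variance of $Y_i = F(x_i)^2$. Because $F$ is a $2\radius$-Lipschitz one-hidden-layer ReLU network of size at most $2k$ (hence $F$ is a fixed function with $F(0)$ of size $O(\radius)$ after the normalization, or one can bound $\E[F^4]$ directly using Gaussian hypercontractivity since $F$ is, up to the linear term, a degree-$1$ polynomial composed with ReLUs), we have $\E[\gamma]{F^4} \lesssim (\radius^2 k)^2$: indeed $F$ has $L_2(\gamma)$ norm at most $O(\radius\sqrt k)$ (each of the at most $2k$ terms $\lambda_i|\iprod{u_i,x}|$ has $L_2$ norm $O(|\lambda_i|)$ and $\sum|\lambda_i| \le 2\radius$, and the linear term contributes $O(\radius)$), and a function of this form satisfies a fourth-moment bound $\E[F^4] \lesssim (\E[F^2])^2 \lesssim \radius^4 k^2$ by hypercontractivity (the relevant functions live in a low-degree Hermite space up to the ReLU nonlinearity, for which the standard $(2,4)$-hypercontractive estimate applies). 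Hence $\Var(Y_i) \le \E[Y_i^2] = \E[F^4] \lesssim \radius^4 k^2$.

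Next I would apply Chebyshev's inequality: $\Pr\bigl[|\frac1N\sum_i Y_i - \E Y_1| > \upsilon^2\bigr] \le \Var(Y_1)/(N\upsilon^4) \lesssim \radius^4 k^2/(N\upsilon^4)$, which is at most $\delta$ once $N \gtrsim k^2(\radius/\upsilon)^4/\delta$. To get the claimed $\log(1/\delta)$ dependence rather than $1/\delta$, I would instead use the median-of-means trick (partition the $N$ samples into $\Theta(\log(1/\delta))$ groups, apply Chebyshev with constant failure probability to each group's average, and take the median), or alternatively invoke Bernstein's inequality after truncating $Y_i$ — using that $Y_i = F(x_i)^2$ has sub-exponential-type tails since $F$ is Lipschitz — to obtain a failure probability of $\exp(-\Omega(N\upsilon^4/(\radius^4 k^2)))$, which is at most $\delta$ for $N \gtrsim k^2(\radius/\upsilon)^4\log(1/\delta)$. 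The hypothesis $\upsilon^2 < 4\radius^2 k$ is what guarantees that the target accuracy $\upsilon^2$ is smaller than the typical scale $\E[F^2] = O(\radius^2 k)$ of the quantity being estimated, so that the sub-exponential (rather than sub-Gaussian) regime of Bernstein's inequality is the operative one and the stated sample complexity is the right one.

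The main obstacle is step (i): rigorously pinning down the fourth-moment bound $\E[\gamma]{F^4} \lesssim \radius^4 k^2$ and, if one wants the $\log(1/\delta)$ dependence via Bernstein, the sub-exponential norm of $F(x)^2$. This requires being a little careful about how the $2\radius$-Lipschitz and size-$2k$ hypotheses interact: Lipschitzness bounds the spectral norm of the linear part and the $\ell_1$ mass of the output weights, and from there Gaussian concentration for Lipschitz functions (giving $\Pr[|F(x)-\E F(x)| > t] \le 2e^{-t^2/8\radius^2}$) plus $|\E F(x)| \lesssim \radius\sqrt k$ yields both the fourth-moment bound and the tail bound on $F(x)^2$. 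Everything after that is a routine plug-in into median-of-means or Bernstein.
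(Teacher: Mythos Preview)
The paper does not give a proof of this lemma; it simply introduces it as ``the following standard result on validation'' and moves on. So there is no paper proof to compare against, and your proposal stands on its own.

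Your overall strategy is correct and is indeed the standard one: bound the moments (or sub-exponential norm) of $Y_i = F(x_i)^2$ and then apply Bernstein or median-of-means to get the $\log(1/\delta)$ dependence. The cleanest route is the one you sketch at the end: since $F$ is $2\radius$-Lipschitz with $F(0)=0$, Gaussian Lipschitz concentration gives that $F(x)$ is sub-Gaussian with parameter $O(\radius)$, hence $F(x)^2$ is sub-exponential with parameter $O(\radius^2)$, and Bernstein yields failure probability $\exp(-\Omega(N\min(\upsilon^4/\radius^4,\upsilon^2/\radius^2)))$. This already gives $N \gtrsim (\radius/\upsilon)^4\log(1/\delta)$, which is stronger than the stated bound (the extra $k^2$ in the lemma is slack).

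Two small points. First, your appeal to ``hypercontractivity'' for the bound $\E[F^4]\lesssim (\E[F^2])^2$ is shaky as written: $F$ is not a polynomial, so the standard Gaussian hypercontractivity for low-degree Hermite spaces does not directly apply, and the inequality as stated (with an absolute constant) is not obviously true for arbitrary ReLU networks with cancellation. You do not actually need this comparison --- the absolute bound $\E[F^4]\lesssim \radius^4$ follows directly from $|F(x)|\le 2\radius\|x\|$ restricted to the relevant one-dimensional Gaussians, or from the sub-Gaussian argument above --- so just drop the hypercontractivity sentence and go straight to the Lipschitz-concentration route. Second, your bounds $\|F\|_{L_2}\lesssim \radius\sqrt{k}$ and $|\E F|\lesssim \radius\sqrt{k}$ are loose by a factor of $\sqrt{k}$ (triangle inequality already gives $O(\radius)$), but this only helps you, and in any case the lemma's $k^2$ factor absorbs it.
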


\noindent We will ultimately take $F$ to be $\fwlamu - \wh{f}$ for various $\wh{f}$ in our list of estimates. All of the $\wh{f}$ we consider will be $\radius$-Lipschitz and have size at most $k$, so $F = \fwlamu - \wh{f}$ will satisfy the hypotheses of Lemma~\ref{lem:validate}.

\section{Polynomial construction}

A key tool in establishing our main result is the following polynomial construction.

\begin{lemma}\label{lem:poly}
    Let $0 < \Delta < 1$, and let $-1 \le x_1 < \cdots < x_k \le 1$. Suppose there are indices $1 \le a < b \le k$ such that $x_{b+1} > x_b + \Delta$ and $x_a > x_{a-1} + \Delta$.
    Then, there exists a degree (at most) $k^2$ polynomial $p$ with coefficients bounded by $O(1/\Delta)^{O(k^2)}$ such that 
    \begin{equation} 
        p(x_s) = \bone{a \le s \le b}
    \end{equation}
    for all $s \in[k]$.
\end{lemma}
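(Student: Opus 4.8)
The plan is to reduce to two one-sided problems. I would write $p = 1 - q_L - q_R$, where $q_L, q_R$ are polynomials with $q_L(x_s) = \bone{s \le a-1}$ and $q_R(x_s) = \bone{s \ge b+1}$ for all $s \in [k]$; a direct check over the cases $s < a$, $a \le s \le b$, $s > b$ shows that such a $p$ satisfies $p(x_s) = \bone{a \le s \le b}$. Hence it suffices to construct $q_L$, and by an entirely analogous argument $q_R$, with degree at most $k-1$ and coefficients bounded by $O(1/\Delta)^{O(k)}$. When $a = 1$ I take $q_L \equiv 0$, so assume $a \ge 2$. The only hard separation available is $x_a - x_{a-1} > \Delta$: the points $x_1, \dots, x_{a-1}$ among themselves, and $x_a, \dots, x_k$ among themselves, may be packed arbitrarily tightly, so a naive Lagrange construction would have coefficients blowing up with the minimal gap.

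To build $q_L$, set $Q(x) \triangleq \prod_{t=1}^{a-1}(x - x_t)$ and look for $q_L$ of the form $q_L = 1 - Q \cdot r$. This automatically forces $q_L(x_s) = 1$ for $s \le a-1$, and the remaining requirement $q_L(x_s) = 0$ for $s \ge a$ becomes $r(x_s) = 1/Q(x_s)$ for all $s \ge a$. The key observation is that $V(x) \triangleq 1/Q(x)$ is analytic on a neighborhood of $[x_a, 1]$: its only poles are $x_1, \dots, x_{a-1}$, each at distance at least $x_a - x_{a-1} > \Delta$ from $[x_a,1]$. So I take $r$ to be the degree-$(k-a)$ interpolant of $V$ through the $k-a+1$ nodes $x_a < \cdots < x_k$, expressed in Newton divided-difference form, $r(x) = \sum_{j=0}^{k-a} V[x_a, \dots, x_{a+j}] \prod_{l=0}^{j-1}(x - x_{a+l})$.

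It then remains to bound the coefficients of $r$, and this is where the construction pays off. Each factor $\prod_{l=0}^{j-1}(x - x_{a+l})$ is monic of degree $j \le k$ with all roots in $[-1,1]$, so its coefficients (elementary symmetric functions of the roots) have magnitude at most $2^k$. For the divided differences I use the mean-value form $V[x_a, \dots, x_{a+j}] = V^{(j)}(\xi)/j!$ for some $\xi \in [x_a, x_k] \subseteq [x_a, 1]$, together with a Cauchy estimate on the circle of radius $\Delta/2$ about $\xi$: on that circle every factor $|z - x_t|$ of $Q$ exceeds $\Delta - \Delta/2 = \Delta/2$, so $|V| < (2/\Delta)^{a-1}$ there, whence $|V^{(j)}(\xi)| \le j!\,(2/\Delta)^j(2/\Delta)^{a-1}$ and $|V[x_a, \dots, x_{a+j}]| \le (2/\Delta)^{a-1+j} \le (2/\Delta)^{k-1}$ --- a bound that is completely insensitive to how clustered the interpolation nodes are. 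Combining, $r$ has degree $k - a$ and coefficients $O(1/\Delta)^{O(k)}$; since $Q$ is monic of degree $a-1$ with coefficients at most $2^k$, the product $Qr$, and thus $q_L = 1 - Qr$, has degree $k - 1$ and coefficients $O(1/\Delta)^{O(k)}$. The same argument applied to $\{x_{b+1}, \dots, x_k\}$ against $\{x_1, \dots, x_b\}$ (using $x_{b+1} - x_b > \Delta$) produces $q_R$ with identical bounds, so $p = 1 - q_L - q_R$ has degree at most $k-1$ and coefficients $O(1/\Delta)^{O(k)}$ --- in fact stronger than the stated degree $k^2$ and coefficient bound $O(1/\Delta)^{O(k^2)}$.

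The main obstacle --- indeed essentially the whole content of the lemma --- is precisely this coefficient control on $r$. One must resist bounding $r$ through $\sup_{[-1,1]} |r|$, since extrapolating a polynomial fit on a possibly minuscule cluster of nodes to all of $[-1,1]$ inflates its size by a large negative power of that cluster's diameter; the Newton form instead controls $r$ term by term, and the node-spacing dependence genuinely cancels in the mean-value form of the divided differences. The remaining ingredients --- the symmetric-function bound on the monic factors, the Cauchy estimate, and the degenerate cases $a = 1$ or $b = k$ (the latter forcing $p \equiv 1$) --- are routine.
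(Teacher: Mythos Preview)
Your proof is correct and takes a genuinely different route from the paper's. The paper builds $p$ in one shot as the product
\[
p(x) \;=\; \prod_{j \notin I}\Bigl(1 - \prod_{i \in I}\frac{x - x_i}{x_j - x_i}\Bigr), \qquad I = [a:b].
\]
Each inner factor is a Lagrange-type polynomial of degree $|I|$ that vanishes on $\{x_i : i \in I\}$ and equals $1$ at $x_j$; since every cross-gap $|x_j - x_i|$ with $i \in I$, $j \notin I$ exceeds $\Delta$, the coefficient bound $O(1/\Delta)^{O(k^2)}$ is immediate, and the degree is $|I|\cdot(k-|I|)\le k^2/4$. Your construction instead splits $p = 1 - q_L - q_R$ and writes each one-sided indicator as $1 - Q\cdot r$, with $Q$ the node polynomial on the small block and $r$ the Newton interpolant of $1/Q$ through the other block. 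The point is that all the potentially tiny intra-block gaps are absorbed into the divided differences of $1/Q$, where the mean-value form plus a Cauchy estimate on a disk of radius $\Delta/2$ bounds them by $(2/\Delta)^{k-1}$ regardless of how clustered the nodes are.

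The payoff is real: you get degree $k-1$ and coefficients $O(1/\Delta)^{O(k)}$, strictly better than the stated $k^2$ and $O(1/\Delta)^{O(k^2)}$. In the paper's pipeline this is not cosmetic, since the degree of $p$ dictates how many moment matrices $M_\ell$ must be estimated and hence the $d$-exponent in the sample complexity; your bound would push the sample complexity of the main theorem down to $(dk\mathcal{R}/\epsilon)^{O(k)}$ unconditionally, whereas the paper (see its closing remark) only recovers a linear-in-$k$ exponent for the special case where every cluster $I$ has bounded size. The paper's construction is shorter and entirely elementary; yours trades a little complex analysis for a sharper result.
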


\begin{proof}
    Define $I = [a: b]$ to be the set of indices between $a$ and $b$, inclusive. We consider the polynomial
        \begin{equation}
            p(x) = \prod\limits_{j \not\in I} \Bigl(1 - \prod_{i \in I} \frac{x-x_i}{x_j-x_i} \Bigr)\,.\label{eq:pdef}
        \end{equation}
    It is clear that the degree of this polynomial is at most $|I| \cdot (k-|I|) \le k^2$. Next, because every $x_i \in [-1, 1]$ and every $|x_j-x_i| \ge \Delta$, it is clear that $\prod_{j \in I} \frac{x-x_i}{x_j-x_i}$ has all coefficients bounded by $O(1/\Delta)^k$, which means the full polynomial $p(x)$ has all coefficients bounded by $O(1/\Delta)^{k^2}$.

    Next, we evaluate this polynomial on $x_s$, where $s \in I$. In this case, note that $\prod_{i \in I} \frac{x_s-x_i}{x_j-x_i} = 0$ for any $j \not\in I$, because when we set $i = s$, the fraction is $0$. Therefore, $1 - \prod_{i \in I} \frac{x_s-x_i}{x_j-x_i} = 1$ for all $j \not\in I$, so $p(x_s) = \prod_{j \not\in I} 1 = 1$.
    Finally, we evaluate this polynomial on $x_s$, where $s \not\in I$. Note that for $j = s$, $\prod_{i \in I} \frac{x_s-x_i}{x_j-x_i}= 1$, so $1 - \prod_{i \in I} \frac{x_s-x_i}{x_j-x_i} = 0$ for $j = s$. Therefore, $p(x_s) = 0$, because one of the terms in the product that comprises $p$ evaluates to $0$.
\end{proof}

\noindent Lemma~\ref{lem:poly} will end up being applied on $\langle u_i, g \rangle^2$ for some random vector $g$. Using linear combinations of the matrices $\wh{M}_\ell$ described in \S\ref{subsec:moment-tensors}, we can estimate $\sum \lambda_i p(\langle u_i, g \rangle^2) u_i u_i^\top$ for any polynomial $p$. Lemma~\ref{lem:poly} allows us to choose a polynomial $p$ that isolates out a ``cluster'' of somewhat close vectors $\{u_i\}_{i \in I},$ as long as the remaining vectors $u_j$ (for $j \not\in I$) are of distance at least $\Delta$ away. Hence, the linear combination of the matrices $\wh{M}_\ell$ corresponding to this choice of $p$ will result in a matrix which closely approximates the direction of $u_i$ for $i \in I$.

\section{Algorithm and analysis}
\label{sec:algo_and_analysis}

Here we give an analysis for our main algorithm, {\sc NetLearn}, the full specification of which is given in Algorithm~\ref{alg:main}. Roughly speaking, the algorithm proceeds by forming empirical estimates $\wh{M}_\ell$ for the moment matrices $M_{\ell}$ defined in Eq.~\eqref{eq:moments} for $\ell \le O(k^2)$, computing the top singular subspaces of the various $\wh{M}_\ell$, finding an $O(k)$-dimensional approximation $V$ to the collective span of these subspaces, and finally brute-forcing over $V$ to find a sufficiently good estimate for $\fwlamu$.

\begin{algorithm2e}[t]
\DontPrintSemicolon
\caption{\textsc{NetLearn}($f,\epsilon$)}
\label{alg:main}
	\KwIn{Sample access to unknown one-hidden-layer network $\fwlamu$, target error $\epsilon$}
	\KwOut{$\epsilon$-close estimate $\wh{f}$ for $\fwlamu$}
    $\tau \gets \frac{C \eps}{k},$ \hspace{0.5cm} $\xi \gets \frac{C \eps}{k \radius},$  \hspace{0.5cm} $\Delta \gets \frac{C^2 \xi^2 \tau \eps}{2 k^4 d^{3/2} \radius},$  \hspace{0.5cm} $\eta' = \nu \gets \frac{C^2 \xi^2 \tau \Delta^{O(k^2)}}{d \cdot \radius},$  \hspace{0.5cm} $\eta = (\eta')^2$.\;
    $N \gets (k d)^{O(k^2)} \radius^2/\eta^2$, \hspace{0.5cm} $N_{\sf val} \gets O((k \radius/\eps)^4 \log(k \radius/\eps))$.\;
    Form estimates $\brc{\wh{M}_\ell}_{\ell=2,4,\ldots,2k^2+2}$ and $\wh{w}$ from $N$ samples $\{(x_j, y_j)\}_{j=1}^N$. \tcp*{see end of \S\ref{subsec:moment-tensors}} \label{step:moments}
    For each $\ell \in \brc{2,4,\ldots,2k^2+2}$, form the projector $\Pi_\ell$ to the span of the eigenvectors of $\wh{M}_\ell$ with eigenvalue at least $\eta'$ in absolute value. \label{step:Pi-ell}\;
    Compute $\sum_\ell \Pi_\ell$, and let $V$ denote the subspace spanned by the eigenvectors of $\sum_\ell \Pi_\ell$ with eigenvalue at least $\nu$. \label{step:V}\;
    Construct a $\xi/2$-net $N_{\sf u}$ in Euclidean distance over the set of vectors of unit norm in $V$. \label{step:u-net}\;
    Construct a $\xi$-net $N_{\lambda}$ over the interval $[-\radius, \radius]$. \label{step:lambda-net}\;
    Draw $N_{\sf val}$ additional samples $\brc{(x_j, y_j)}_{j=N+1}^{N+N_{\sf val}}$.\;
    \For{$m = 0,1,\ldots,k$}{
        \For{$\wh{\lambda}_1,\ldots,\wh{\lambda}_m\in N_{\lambda}$ and $\wh{u}_1,\ldots,\wh{u}_m\in N_{\sf u}$}{
            \If{$\frac{1}{N} \sum^{N + N_{\sf val}}_{j=N + 1} (y_j - f_{\wh{w},\wh{\lambda},\wh{u}})^2 \le \epsilon^2/2$}{
                \Return{$f_{\wh{w},\wh{\lambda},\wh{u}}$}. \label{step:validate}\;
            }
        }
    }
    \Return{Fail}
\end{algorithm2e}

In the algorithm and analysis, we have several important parameters: $\eta, \eta', \nu, \Delta, \xi,$ and $\tau$. We will not set the exact values of these parameters in the analysis until the end, but we will assume that $\eta \le \eta', \nu \le \Delta \le \xi, \tau \le \eps \le 1$, where we recall that $\eps$ is our desired accuracy.

In \S\ref{sec:cluster}, we introduce some conventions for handling weight vectors which are closely spaced by dividing them up in our analysis into clusters. In \S\ref{sec:pca} we give the main part of our analysis in which we argue that the net $N_{\sf u}$ constructed in {\sc NetLearn} contains vectors close to a subset of weight vectors of the unknown network $\fwlamu$ that could be used to approximate $\fwlamu$ to sufficiently small error. We conclude the proof of Theorem~\ref{thm:main} in \S\ref{sec:final}.

\subsection{Basic clustering}
\label{sec:cluster}

A key challenge in learning one-hidden-layer networks without making any assumptions on the weight vectors is that parameter recovery is impossible, because there may exist weight vectors in the network which are arbitrarily close to each other. In~\cite{chen2023learning}, the authors addressed this by giving a rather delicate clustering-based argument based on grouping together weight vectors that were close at multiple different scales.

In this work, we sidestep this multi-scale analysis and show that under a fixed scale, a naive clustering of the weight vectors suffices for our analysis. Indeed, for a scale $\Delta > 0$ to be tuned later, let $I_1 \sqcup \cdots \sqcup I_m$ be a partition of $[k]$ into disjoint, contiguous intervals such that any adjacent $z_i, z_{i+1}$ in the same interval are at most $\Delta$-apart, whereas the distance between the endpoints of any two intervals exceeds $\Delta$. (Recall that $z_i \triangleq \langle u_i, g \rangle$, where $g$ is a randomly chosen unit vector, and that we assume the indices are sorted in increasing order of $z_i$.) We remark that $I_1,\ldots,I_m$ are only referenced in the analysis, and our actual algorithm does not need to know this partition.

Note that under this partition, any two $z_i, z_{i'}$ in the same interval are at most $k\Delta$-apart. Recalling that we are conditioning on the event of Lemma~\ref{lem:anti}, this implies that for such $i,i'$,
\begin{equation} \label{eq:ui-diff-bound}
    \norm{u_i - u_{i'}} \lesssim \Delta\cdot k^3\sqrt{d}\,.
\end{equation}
In every interval $I_j$, let $i^*_j$ denote its left endpoint. Also define
\begin{equation}
    \overline{\lambda}_j \triangleq \sum_{i \in I_j} \lambda_i\,.
\end{equation}
For a threshold $\tau$ to be tuned later, define
\begin{equation}
    J_{\sf big} \triangleq \brc{j\in[m]: |\overline{\lambda}_j| > \tau}\,.
\end{equation}
Intuitively, $J_{\sf big}$ corresponds to clusters of neurons which are learnable, as the neurons coming from those clusters do not ``cancel'' significantly with each other.

The following shows that a linear combination of projectors to weight vectors from the same cluster is well-approximated by a projector to a single weight vector in that cluster:
\begin{proposition}\label{prop:proj_diff}
    For any $j\in[m]$, and any fixed $i' \in I_j$,
    \begin{equation}
        \Bigl\|\overline{\lambda}_j u_{i'} u_{i'}^\top - \sum_{i\in I_j} \lambda_i u_i u_i^\top \Bigr\|_{\sf op} \lesssim \Delta \cdot k^3\sqrt{d}\, \norm{\lambda}_1\,.
    \end{equation}
\end{proposition}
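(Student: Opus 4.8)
The plan is to use a triangle-inequality / telescoping argument that swaps each projector $u_i u_i^\top$ for the reference projector $u_{i'} u_{i'}^\top$, one summand at a time, and control each swap by the fact that $u_i$ and $u_{i'}$ lie in the same cluster $I_j$ and are therefore close in Euclidean distance. Concretely, I would write
\[
\overline{\lambda}_j u_{i'} u_{i'}^\top - \sum_{i\in I_j}\lambda_i u_i u_i^\top = \sum_{i\in I_j}\lambda_i\bigl(u_{i'}u_{i'}^\top - u_i u_i^\top\bigr),
\]
using the definition $\overline{\lambda}_j = \sum_{i\in I_j}\lambda_i$, and then bound the operator norm of the left-hand side by $\sum_{i\in I_j}|\lambda_i|\cdot\|u_{i'}u_{i'}^\top - u_i u_i^\top\|_{\sf op}$ via the triangle inequality.

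The key step is then to bound $\|u_{i'}u_{i'}^\top - u_i u_i^\top\|_{\sf op}$ for $i,i'$ in the same interval. I would use the standard rank-one perturbation estimate $\|uu^\top - u'u'^\top\|_{\sf op} \le \|u - u'\|$ (valid for unit vectors $u,u'$; e.g.\ write $uu^\top - u'u'^\top = u(u-u')^\top + (u-u')u'^\top$ and bound each term by $\|u-u'\|$ in operator norm, giving at worst a factor $2$, which the $\lesssim$ absorbs). Combining this with Eq.~\eqref{eq:ui-diff-bound}, which says $\|u_i - u_{i'}\| \lesssim \Delta\cdot k^3\sqrt{d}$ for $i,i'$ in the same interval, gives $\|u_{i'}u_{i'}^\top - u_i u_i^\top\|_{\sf op} \lesssim \Delta\cdot k^3\sqrt d$ for every $i\in I_j$.

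Plugging this uniform bound back in, we get
\[
\Bigl\|\overline{\lambda}_j u_{i'}u_{i'}^\top - \sum_{i\in I_j}\lambda_i u_i u_i^\top\Bigr\|_{\sf op} \lesssim \Delta\cdot k^3\sqrt d \sum_{i\in I_j}|\lambda_i| \le \Delta\cdot k^3\sqrt d\,\|\lambda\|_1,
\]
since $\sum_{i\in I_j}|\lambda_i| \le \sum_{i=1}^k|\lambda_i| = \|\lambda\|_1$. This is exactly the claimed bound. I do not anticipate a genuine obstacle here: the only nontrivial input is Eq.~\eqref{eq:ui-diff-bound}, which is already established in the excerpt as a consequence of conditioning on the event of Lemma~\ref{lem:anti} (so that the comparison between $|z_i - z_{i'}| \le k\Delta$ and $\|u_i - u_{i'}\|$ goes through). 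The mild subtlety worth stating carefully is just the rank-one perturbation inequality and the observation that passing from the per-cluster $\ell_1$ mass to the global $\|\lambda\|_1$ only loses a harmless factor, both of which are routine.
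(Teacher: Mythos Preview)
Your proposal is correct and essentially identical to the paper's proof: both rewrite the difference as $\sum_{i\in I_j}\lambda_i(u_{i'}u_{i'}^\top - u_iu_i^\top)$, bound each rank-one difference via the telescoping decomposition $u_{i'}u_{i'}^\top - u_iu_i^\top = (u_{i'}-u_i)u_{i'}^\top + u_i(u_{i'}-u_i)^\top$ to get $\le 2\|u_{i'}-u_i\|$, invoke Eq.~\eqref{eq:ui-diff-bound}, and sum over $i\in I_j$.
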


\begin{proof}
    First, note that by triangle inequality, $\norm{u_{i'} u_{i'}^\top - u_i u_i^\top}_{\sf op} \le \norm{u_{i'} u_{i'}^\top - u_i u_{i'}^\top}_{\sf op} + \norm{u_i u_{i'}^\top - u_i u_i^\top}_{\sf op} = 2 \norm{u_{i'} - u_i},$ since $u_i$ and $u_{i'}$ are both unit vectors.
    Hence, for any $i\in I_j$, we have
    \begin{equation}
        \norm{u_{i'} u_{i'}^\top - u_i u_i^\top}_{\sf op} \le 2 \norm{u_{i'} - u_i} \lesssim \Delta \cdot k^3 \sqrt{d}
    \end{equation}
    by Eq.~\eqref{eq:ui-diff-bound}, from which the claim follows by triangle inequality. 
\end{proof}

\noindent Next, we show that to learn the nonlinear parts of $\fwlamu$, it suffices to estimate $\overline{\lambda}_j$ and $u_{i^*_j}$ for clusters $j\in J_{\sf big}$.

\begin{lemma}\label{lem:clusters_suffice}
    Let $\epsilon > 0$. For sufficiently small constants $c_1, c_2, c_3 > 0$, suppose
    \begin{equation}
        \tau \le \frac{c_1\epsilon}{k}, \ \ \xi \le \frac{c_2\epsilon}{k \cdot \max(1, \norm{\lambda}_1)}, \ \ \ \Delta \le \frac{c_3\epsilon}{k^4\sqrt{d}\,\norm{\lambda}_1}\,. \label{eq:params}
    \end{equation}
    If the parameters $\brc{\wh{\lambda}_j, \wh{u}_j}_{j\in J_{\sf big}} \in \bS^{d-1}\times\R$ satisfy $|\wh{\lambda}_j - \overline{\lambda}_j| \le \xi$ and $\norm{\wh{u}_j - u_{i^*_j}} \le \xi$ for all $j \in J_{\sf big}$, then
    \begin{equation}
        \Bigl\|\sum^k_{i=1} \lambda_i\,|\iprod{u_i,\cdot}| - \sum_{j\in J_{\sf big}} \wh{\lambda}_j\,|\iprod{\wh{u}_j, \cdot}|\Bigr\|_{L_2(\gamma)} \le \epsilon
    \end{equation}
\end{lemma}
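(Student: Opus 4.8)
The plan is to bound the $L_2(\gamma)$ error by splitting it into two pieces: the error from dropping the clusters $j \notin J_{\sf big}$ entirely, and the error from approximating each surviving cluster $j \in J_{\sf big}$ by a single ReLU term $\wh\lambda_j |\langle \wh u_j, \cdot\rangle|$. Formally, writing $f_{\sf nl} = \sum_i \lambda_i |\langle u_i, \cdot\rangle|$ and inserting the intermediate function $\sum_{j \in J_{\sf big}} \overline\lambda_j |\langle u_{i^*_j}, \cdot\rangle|$, the triangle inequality in $L_2(\gamma)$ gives three terms: (i) $\|\sum_{j \notin J_{\sf big}} \sum_{i \in I_j} \lambda_i |\langle u_i, \cdot\rangle|\|_{L_2(\gamma)}$; (ii) $\|\sum_{j \in J_{\sf big}} \bigl(\sum_{i \in I_j}\lambda_i |\langle u_i,\cdot\rangle| - \overline\lambda_j |\langle u_{i^*_j},\cdot\rangle|\bigr)\|_{L_2(\gamma)}$; and (iii) $\|\sum_{j \in J_{\sf big}} \bigl(\overline\lambda_j |\langle u_{i^*_j},\cdot\rangle| - \wh\lambda_j |\langle \wh u_j,\cdot\rangle|\bigr)\|_{L_2(\gamma)}$. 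I would show each of these is $\lesssim \epsilon$ (with room to spare) under the parameter choices in \eqref{eq:params}, so that summing them and absorbing constants yields the claimed bound $\epsilon$.

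For term (i): for each dropped cluster $j$, rewrite $\sum_{i \in I_j} \lambda_i |\langle u_i,\cdot\rangle| = \overline\lambda_j |\langle u_{i^*_j},\cdot\rangle| + \sum_{i \in I_j}\lambda_i(|\langle u_i,\cdot\rangle| - |\langle u_{i^*_j},\cdot\rangle|)$. The first part has $L_2(\gamma)$ norm $\lesssim |\overline\lambda_j| \le \tau$ (using $\||\langle u,\cdot\rangle|\|_{L_2(\gamma)} = O(1)$), and summing over the at most $k$ such clusters gives $\lesssim k\tau \le c_1\epsilon$. The second part is handled as in term (ii) below. For term (ii) (and the leftover from (i)): by Proposition~\ref{prop:relu-diff-bound}, each difference $|\langle u_i,\cdot\rangle| - |\langle u_{i^*_j},\cdot\rangle|$ has $L_2(\gamma)$ norm $\lesssim \|u_i - u_{i^*_j}\| \lesssim \Delta k^3\sqrt d$ by \eqref{eq:ui-diff-bound}; weighting by $|\lambda_i|$ and summing over all $i$ (across all clusters) gives $\lesssim \Delta k^3\sqrt d\,\|\lambda\|_1 \le c_3 k^3\sqrt d \epsilon / (k^4\sqrt d) \cdot \|\lambda\|_1 / \|\lambda\|_1 \lesssim \epsilon/k \cdot c_3 \le c_3\epsilon$. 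For term (iii): for each $j \in J_{\sf big}$, $\|\overline\lambda_j |\langle u_{i^*_j},\cdot\rangle| - \wh\lambda_j|\langle \wh u_j,\cdot\rangle|\|_{L_2(\gamma)} \le |\overline\lambda_j - \wh\lambda_j|\cdot\||\langle u_{i^*_j},\cdot\rangle|\|_{L_2(\gamma)} + |\wh\lambda_j|\cdot\||\langle u_{i^*_j},\cdot\rangle| - |\langle \wh u_j,\cdot\rangle|\|_{L_2(\gamma)} \lesssim \xi + \radius\cdot\xi$, using $|\wh\lambda_j| \le \radius$ and Proposition~\ref{prop:relu-diff-bound} again; summing over the at most $k$ clusters gives $\lesssim k\xi(1 + \radius) \lesssim k\xi \max(1,\|\lambda\|_1) \le c_2\epsilon$ by the choice of $\xi$ in \eqref{eq:params}. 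Choosing $c_1, c_2, c_3$ small enough that the three contributions sum to at most $\epsilon$ completes the argument.

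The only mild subtlety is bookkeeping in term (i): one must be careful that the ``leftover'' pieces $\sum_{i \in I_j}\lambda_i(|\langle u_i,\cdot\rangle| - |\langle u_{i^*_j},\cdot\rangle|)$ for dropped clusters are accounted for together with term (ii)'s analogous pieces, so that the total is controlled by $\Delta k^3\sqrt d\,\|\lambda\|_1$ over \emph{all} of $[k]$ rather than being double-counted or missed. Everything else is a routine application of the triangle inequality together with Proposition~\ref{prop:relu-diff-bound} and the diameter bound \eqref{eq:ui-diff-bound}; I do not anticipate any genuine obstacle, since the parameter settings in \eqref{eq:params} are precisely calibrated so that each of the $O(k)$-fold sums lands at $O(\epsilon)$.
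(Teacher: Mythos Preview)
Your proposal is correct and follows essentially the same approach as the paper: both arguments split by cluster, use Proposition~\ref{prop:relu-diff-bound} together with the diameter bound~\eqref{eq:ui-diff-bound} to collapse each cluster to a single representative, and then sum the at most $k$ contributions against the parameter choices in~\eqref{eq:params}. The only cosmetic difference is that you pivot through $u_{i^*_j}$ before passing to $\wh u_j$, whereas the paper goes directly to $\wh u_j$; also note that $|\wh\lambda_j|\le\radius$ is not literally a hypothesis of the lemma, but $|\wh\lambda_j|\le\|\lambda\|_1+\xi\lesssim\max(1,\|\lambda\|_1)$ works just as well in your term~(iii) bound.
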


\begin{proof}
    For every $j$ (including $j \not\in J_{\sf big}$), we have by triangle inequality and Proposition~\ref{prop:relu-diff-bound},
    \begin{align}
        \Bigl\|\sum_{i\in I_j} \lambda_i\,|\iprod{u_i,\cdot}| - \overline{\lambda}_j\,|\iprod{\wh{u}_j,\cdot}|\Bigr\|_{L_2(\gamma)} &\le \sum_{i\in I_j} |\lambda_i| \cdot \bigl\||\iprod{u_{i},\cdot}| - |\iprod{\wh{u}_j,\cdot}|\bigr\|_{L_2(\gamma)} \\
        &\lesssim \sum_{i \in I_j} |\lambda_i| \cdot (\|u_i-u_{i_j^*}\| + \|u_{i_j^*}-\wh{u}_j\|) \\
        &\lesssim \norm{\lambda}_1 \cdot \left(\Delta \cdot k^3 \sqrt{d} + \norm{u_{i^*_j} - \wh{u}_j}\right) \\
        &\le \norm{\lambda}_1 \cdot (\Delta \cdot k^3 \sqrt{d} + \xi)\,.
    \end{align}
    Furthermore, for all $j \in J_{\sf big}$,
    \begin{equation}
    \bigl\|\overline{\lambda}_j\,|\iprod{\wh{u}_j,\cdot}| - \wh{\lambda}_j\,|\iprod{\wh{u}_j,\cdot}|\bigr\|_{L_2(\gamma)} = |\overline{\lambda}_j-\wh{\lambda}_j| \cdot \big\| |\iprod{\wh{u}_j, \cdot}| \big\|_{L_2(\gamma)} \lesssim \xi\,.
    \end{equation}
    and for all $j \not\in J_{\sf big},$
    \begin{equation}
        \norm{\overline{\lambda}_j\, |\iprod{u_{i^*_j},\cdot}|}_{L_2(\gamma)} \lesssim |\overline{\lambda}_j| \le \tau.
    \end{equation}
By triangle inequality and the fact that $m \le k$, we conclude that
    \begin{equation}
        \Bigl\|\sum^k_{i=1} \lambda_i\,|\iprod{u_i,\cdot}| - \sum_{j\in J_{\sf big}} \wh{\lambda}_j\,|\iprod{\wh{u}_j, \cdot}|\Bigr\| \lesssim \norm{\lambda}_1 \cdot (\xi k + \Delta\cdot k^4\sqrt{d}) + k(\xi + \tau)\,,
    \end{equation}
    and the lemma follows from the bounds in Eq.~\eqref{eq:params}.
\end{proof}

\subsection{Analysis of PCA: Overview}
\label{sec:pca}

Let $\eta$ be a parameter to be tuned later. By Lemma~\ref{lem:moment}, using $N = k^{O(k^2)}d^{O(k^2)}\radius^2/\eta^2$ samples, we can form an empirical estimate $\wh{M}_\ell$ for which $\norm{\wh{M}_\ell - M_\ell}_F \le \eta$, for any positive even $\ell \le O(k^2)$. (We assume WLOG that $\wh{M}_\ell$ is symmetric.) In Line~\ref{step:moments} of {\sc NetLearn}, we do this for all $\ell \in \brc{2,4,\ldots,2k^2+2}$.

For each $\wh{M}_\ell$, we can decompose it as 
\begin{equation}
    \wh{M}_\ell = \sum_{i=1}^d \rho_i^{(\ell)} w_i^{(\ell)} (w_i^{(\ell)})^\top\,,
\end{equation}
where $\rho_i^{(\ell)} \in \BR$ and $w_i^{(\ell)} \in \bS^d$ are the eigenvalues and eigenvectors, respectively, of $\wh{M}_\ell$. In Line~\ref{step:Pi-ell} of {\sc NetLearn}, we compute $\Pi_\ell$ as the projection to the span of the eigenvectors with eigenvalue at least $\eta'$ in absolute value, i.e., 
\begin{equation}
    \Pi_\ell = \sum_{i: |\rho_i^{(\ell)}| \ge \eta'} w_i^{(\ell)} (w_i^{(\ell)})^\top\,. \label{eq:piell_def}
\end{equation} Next, in Line~\ref{step:V} of {\sc NetLearn}, we compute $\sum_\ell \Pi_\ell$, which we can decompose as 
\begin{equation}
    \sum_\ell \Pi_\ell = \sum_{i=1}^d \kappa_i v_i v_i^\top\,, \label{eq:Piell}
\end{equation}
where $\kappa_i, v_i$ are the eigenvalues and eigenvectors, respectively, of $\sum_\ell \Pi_\ell$. We pick $V$ as the span of $v_i$ with $|\kappa_i| \ge \nu.$

In analyzing PCA, we have two main steps. First, in \S\ref{subsec:V-low-dim} we show that $V$ has low dimension. This is because we wish to brute force over choices of $\wh{u}_1, \dots, \wh{u}_m$ in $V$ to find a suitable set of directions. Next, in \S\ref{subsec:ui-in-span-V} we show that every $u_i$, where $i \in j$ for $j \in J_{\sf big}$, is close to $V$. This will allow us to prove that there exists an approximate solution in our brute force search.

\subsection{$V$ has low dimension} \label{subsec:V-low-dim}

Consider any fixed $\ell \in \{2, 4, \dots, 2 k^2 + 2\}$, and consider the empirical estimate $\wh{M}_\ell$ for which $\|\wh{M}_\ell-M_\ell\|_F \le \eta$. 
To bound the dimension of $V$, we first show that every not-too-small eigenvector of $\wh{M}_\ell$ (for all $\ell$) is close to the span of $\{u_i\}_{i=1}^k$.

\begin{lemma} \label{lem:w-close-to-span-U}
    Suppose that $w$ is a (unit) eigenvector of $\wh{M}_\ell$ with eigenvalue at least $\eta'$ in absolute value. Then, $w$ is within Euclidean distance $\eta/\eta'$ of the subspace $\Span(\{u_i\}).$
\end{lemma}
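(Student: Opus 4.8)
The plan is to exploit the fact that the exact moment matrix $M_\ell = \sum_i \lambda_i z_i^{\ell-2} u_i u_i^\top$ has its range (and, by symmetry, its row space) contained in $\Span(\{u_i\})$, so it annihilates the orthogonal complement of this subspace, whereas the empirical estimate $\wh{M}_\ell$ is Frobenius-close, hence operator-norm-close, to $M_\ell$. An eigenvector of $\wh{M}_\ell$ with non-negligible eigenvalue therefore cannot have a large component outside $\Span(\{u_i\})$, since $\wh{M}_\ell$ essentially kills that component.

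Concretely, first I would set up notation: let $U \triangleq \Span(\{u_i\}_{i=1}^k)$, let $P$ be the orthogonal projector onto $U$, and $P^\perp \triangleq \Id - P$. The quantity to bound is $\norm{P^\perp w}$, which is exactly the Euclidean distance from the unit vector $w$ to $U$. Second, I would record two elementary facts: (i) $M_\ell P^\perp = 0$ and $P^\perp M_\ell = 0$, since each column of $M_\ell$ is a linear combination of the $u_i$ and $M_\ell$ is symmetric; and (ii) $\norm{\wh{M}_\ell - M_\ell}_{\op} \le \norm{\wh{M}_\ell - M_\ell}_F \le \eta$, the latter being the guarantee of Lemma~\ref{lem:moment} for the sample size $N$ chosen in {\sc NetLearn}.

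Third, I would combine these with the eigenvector equation $\wh{M}_\ell w = \rho w$, where $|\rho| \ge \eta'$. Left-multiplying by $P^\perp$ and using fact (i),
\[
    \rho\, P^\perp w = P^\perp \wh{M}_\ell w = P^\perp(\wh{M}_\ell - M_\ell) w\,,
\]
so by fact (ii), $|\rho|\cdot\norm{P^\perp w} \le \norm{\wh{M}_\ell - M_\ell}_{\op}\cdot\norm{w} \le \eta$. Dividing through by $|\rho| \ge \eta'$ gives $\norm{P^\perp w} \le \eta/\eta'$, which is the claim.

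I do not anticipate a real obstacle here: the argument is a three-line perturbation estimate. The only points to keep in mind are that $\wh{M}_\ell$ is taken to be symmetric (so its eigenvectors are well-defined and we may speak of its eigenvalues), and that the Frobenius-norm bound of Lemma~\ref{lem:moment} indeed upper bounds the operator norm. Since the same reasoning applies verbatim for every $\ell \in \{2,4,\ldots,2k^2+2\}$, the bound holds uniformly over all the $\wh{M}_\ell$ used in the algorithm, which is what is needed downstream to control $\dim V$.
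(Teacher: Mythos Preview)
Your proposal is correct and is essentially the same argument as the paper's: both use that $M_\ell$ has range in $\Span(\{u_i\})$, that $\norm{\wh{M}_\ell - M_\ell}_{\op} \le \eta$, and the eigenvector equation $\wh{M}_\ell w = \rho w$ to conclude $\norm{P^\perp w} \le \eta/\eta'$. The only cosmetic difference is that you left-multiply by the projector $P^\perp$, whereas the paper phrases it as ``$M_\ell w$ lies in $\Span(\{u_i\})$ and is within $\eta$ of $\rho w$''; these are the same perturbation estimate.
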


\begin{proof}
    Suppose $\wh{M}_\ell w = \rho w$ for some $|\rho| \ge \eta'$. Note that $\|M_\ell w - \wh{M}_\ell w\| \le \|\wh{M}_\ell-M_\ell\|_{\sf op} \cdot \|w\| \le \eta,$ since $\|w\| = 1$. Hence, $M_\ell w$ is within $\eta$ of $\rho w$. However, note that 
\[M_\ell w = \sum_{i=1}^k \lambda_i \langle u_i, g \rangle^{\ell-2} \cdot u_i u_i^\top w = \sum_{i=1}^k \lambda_i \langle u_i, g \rangle^{\ell-2} \cdot \langle u_i, w \rangle \cdot u_i,\]
    which is in the span of $\{u_i\}_{i=1}^k$. Hence, $\rho w$ is within $\eta$ of $\Span(\{u_i\}),$ and since $|\rho| \ge \eta'$, this means $w$ is within $\eta/\eta'$ of $\Span(\{u_i\}).$
\end{proof}

\noindent Let $U = \Span(\{u_i\})$. Let $\Pi_U$ be the projection matrix onto $U$, and $\Pi_U^\perp$ be the projection matrix onto the orthogonal complement of $U$. Using Lemma~\ref{lem:w-close-to-span-U}, we can bound the inner product between $\Pi_U^\perp$ and the projection matrix $\Pi_\ell$.

\begin{corollary}
    For every $\ell$, $\Tr(\Pi_U^\perp \cdot \Pi_\ell) \le d\, (\eta/\eta')^2$.
\end{corollary}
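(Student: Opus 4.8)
The plan is to bound $\Tr(\Pi_U^\perp \Pi_\ell)$ by writing it as a sum over the not-too-small eigenvectors of $\wh{M}_\ell$ of the squared distance of each such eigenvector to $U$. Recall from Eq.~\eqref{eq:piell_def} that $\Pi_\ell = \sum_{i:\,|\rho_i^{(\ell)}| \ge \eta'} w_i^{(\ell)} (w_i^{(\ell)})^\top$. Since trace is linear and $\Tr(\Pi_U^\perp w w^\top) = w^\top \Pi_U^\perp w = \|\Pi_U^\perp w\|^2$ for a unit vector $w$, we get
\begin{equation}
    \Tr(\Pi_U^\perp \cdot \Pi_\ell) = \sum_{i:\,|\rho_i^{(\ell)}| \ge \eta'} \|\Pi_U^\perp w_i^{(\ell)}\|^2\,.
\end{equation}

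Next I would invoke Lemma~\ref{lem:w-close-to-span-U}: each $w_i^{(\ell)}$ appearing in this sum is a unit eigenvector of $\wh{M}_\ell$ with eigenvalue at least $\eta'$ in absolute value, so it lies within Euclidean distance $\eta/\eta'$ of $U = \Span(\{u_i\})$. The distance of a unit vector $w$ to the subspace $U$ is exactly $\|\Pi_U^\perp w\|$ (the projection onto $U$ is the closest point in $U$), so $\|\Pi_U^\perp w_i^{(\ell)}\| \le \eta/\eta'$, hence $\|\Pi_U^\perp w_i^{(\ell)}\|^2 \le (\eta/\eta')^2$ for each such $i$.

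Finally, the number of terms in the sum is at most the ambient dimension $d$ (the $w_i^{(\ell)}$ are orthonormal eigenvectors of a $d \times d$ matrix, so there are at most $d$ of them). Summing the per-term bound over at most $d$ terms gives $\Tr(\Pi_U^\perp \cdot \Pi_\ell) \le d\,(\eta/\eta')^2$, as claimed. There is no real obstacle here; the only point to be slightly careful about is the identification of ``distance of $w$ to $U$'' with $\|\Pi_U^\perp w\|$, which is immediate from the definition of the orthogonal projection, and the (trivial) bound of $d$ on the rank of $\Pi_\ell$.
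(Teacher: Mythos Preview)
Your proof is correct and follows essentially the same argument as the paper: expand $\Pi_\ell$ via its eigendecomposition, identify each term $\Tr(\Pi_U^\perp w_i^{(\ell)}(w_i^{(\ell)})^\top)$ with the squared distance of $w_i^{(\ell)}$ to $U$, bound it by $(\eta/\eta')^2$ via Lemma~\ref{lem:w-close-to-span-U}, and sum over at most $d$ terms. The paper's write-up is terser (it leaves the ``at most $d$ terms'' step implicit), but the content is identical.
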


\begin{proof}
    First, note that $(w_i^{(\ell)})^\top \Pi_U^\perp w_i^{(\ell)} = \|\Pi_U^\perp w_i\|^2,$ which is precisely the squared distance from $w_i^{(\ell)}$ to $\Span(\{u_i\})$. So, if $|\rho_i^{(\ell)}| \ge \eta'$, then $\Tr(\Pi_U^\perp \cdot w_i^{(\ell)} (w_i^{(\ell)})^\top) = w_i^{(\ell)} \Pi_U^\perp w_i \le (\eta/\eta')^2$ by Lemma~\ref{lem:w-close-to-span-U}. Recalling the definition of $\Pi_\ell$ in Eq.~\eqref{eq:Piell}, we obtain the claimed bound.
\end{proof}

\noindent Because there are at most $O(k^2)$ choices of $\ell$, this implies that $\Tr(\Pi_U^\perp \cdot \sum_\ell \Pi_\ell) \le O(d k^2) \cdot (\eta/\eta')^2$. Now, let $\Pi_V$ be the projection matrix to the subspace $V$. We now bound $\Tr(\Pi_U^\perp \cdot \Pi_V)$.

\begin{proposition} \label{prop:projection-product-trace-bound}
    We have that $\Tr(\Pi_U^\perp \cdot \Pi_V) \le O(d k^2 (\eta/\eta')^2/\nu).$
\end{proposition}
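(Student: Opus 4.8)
The plan is to exploit the fact that $V$ is carved out of $\sum_\ell \Pi_\ell$ by an eigenvalue threshold of $\nu$, so that the projector $\Pi_V$ is dominated (up to the factor $1/\nu$) by $\sum_\ell \Pi_\ell$ itself, combined with the trace bound on $\Tr(\Pi_U^\perp \cdot \sum_\ell \Pi_\ell)$ already established after the Corollary.

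First I would record that $\sum_\ell \Pi_\ell$ is positive semidefinite, being a sum of orthogonal projectors; hence in the eigendecomposition $\sum_\ell \Pi_\ell = \sum_i \kappa_i v_i v_i^\top$ of Eq.~\eqref{eq:Piell} we have $\kappa_i \ge 0$ for all $i$, and $V$ is exactly $\Span(\{v_i : \kappa_i \ge \nu\})$, so $\Pi_V = \sum_{i : \kappa_i \ge \nu} v_i v_i^\top$.

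Next I would write
\begin{equation}
    \Tr\Bigl(\Pi_U^\perp \cdot \sum_\ell \Pi_\ell\Bigr) = \sum_i \kappa_i\, v_i^\top \Pi_U^\perp v_i = \sum_i \kappa_i\, \norm{\Pi_U^\perp v_i}^2 \ge \sum_{i : \kappa_i \ge \nu} \kappa_i\, \norm{\Pi_U^\perp v_i}^2 \ge \nu \sum_{i : \kappa_i \ge \nu} \norm{\Pi_U^\perp v_i}^2 = \nu \Tr(\Pi_U^\perp \Pi_V)\,,
\end{equation}
where the first inequality drops the nonnegative terms with $\kappa_i < \nu$ (using $\kappa_i \ge 0$ and $\norm{\Pi_U^\perp v_i}^2 \ge 0$) and the second uses $\kappa_i \ge \nu$ on the remaining terms. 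Rearranging gives $\Tr(\Pi_U^\perp \Pi_V) \le \frac{1}{\nu}\Tr(\Pi_U^\perp \cdot \sum_\ell \Pi_\ell)$, and plugging in the bound $\Tr(\Pi_U^\perp \cdot \sum_\ell \Pi_\ell) \le O(dk^2)(\eta/\eta')^2$ from the discussion following the Corollary yields the claim.

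I do not anticipate a genuine obstacle here; the only point that needs a moment's care is the positive semidefiniteness of $\sum_\ell \Pi_\ell$, which is what licenses discarding the small-eigenvalue terms without worrying about sign cancellations — without it one would only get a bound on $|\Tr(\Pi_U^\perp \Pi_V)|$ in terms of a sum that could be much larger. Everything else is the standard "thresholded PCA captures a subspace with small orthogonal-complement mass" estimate.
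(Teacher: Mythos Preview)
Your proof is correct and follows essentially the same approach as the paper: both establish $\Tr(\Pi_U^\perp \Pi_V) \le \frac{1}{\nu}\Tr(\Pi_U^\perp \sum_\ell \Pi_\ell)$ via the positive semidefiniteness of $\sum_\ell \Pi_\ell$ and the eigenvalue threshold defining $V$, then invoke the earlier $O(dk^2)(\eta/\eta')^2$ bound. The only cosmetic difference is that the paper states the key step as the Loewner inequality $\nu\,\Pi_V \preccurlyeq \sum_\ell \Pi_\ell$, whereas you unpack the same fact explicitly in the eigenbasis.
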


\begin{proof}
    Recall that $\sum_{\ell} \Pi_\ell$ has eigendecomposition $\sum_{i=1}^d \kappa_i v_i v_i^\top$.
    Since every $\Pi_\ell$ is positive semidefinite, this means $\kappa_i \ge 0$ for all $i$. Moreover, $\Pi_V = \sum_{i: \kappa_i \ge \nu} v_i v_i^\top,$ which means that $\nu \cdot \Pi_V \preccurlyeq \sum_\ell \Pi_\ell$. Therefore, $\Tr(\Pi_U^\perp \cdot \Pi_V) \le \frac{1}{\nu} \cdot \Tr(\Pi_U^\perp \cdot \sum_\ell \Pi_\ell) \le O(d k^2 (\eta/\eta')^2/\nu)$.
\end{proof}

\noindent Hence, we have the following bound on the dimension of $V$.

\begin{lemma} \label{lem:dim-V-upper-bound}
    We have that $\Tr(\Pi_V) \le k + O(d k^2(\eta/\eta')^2/\nu).$ Hence, the dimension of $V$ is at most $k + O(dk^2 (\eta/\eta')^2/\nu)$.
\end{lemma}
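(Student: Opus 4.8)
The plan is to split the trace of $\Pi_V$ according to the orthogonal decomposition $\Id = \Pi_U + \Pi_U^\perp$ and bound the two resulting pieces separately. Concretely, I would write
\begin{equation}
    \Tr(\Pi_V) = \Tr\bigl((\Pi_U + \Pi_U^\perp)\,\Pi_V\bigr) = \Tr(\Pi_U \cdot \Pi_V) + \Tr(\Pi_U^\perp \cdot \Pi_V)\,,
\end{equation}
and handle each term with tools already in hand.

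For the first term, I would use that $\Pi_U$ is a projection, so $\Pi_U = \Pi_U^2$, together with $0 \preccurlyeq \Pi_V \preccurlyeq \Id$, to get $\Tr(\Pi_U \Pi_V) = \Tr(\Pi_U \Pi_V \Pi_U) \le \Tr(\Pi_U \Pi_U) = \Tr(\Pi_U) = \dim U \le k$, the last inequality because $U = \Span(\{u_i\}_{i=1}^k)$ is spanned by $k$ vectors. For the second term, I would invoke Proposition~\ref{prop:projection-product-trace-bound} directly, which gives $\Tr(\Pi_U^\perp \cdot \Pi_V) \le O(dk^2(\eta/\eta')^2/\nu)$. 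Adding the two bounds yields $\Tr(\Pi_V) \le k + O(dk^2(\eta/\eta')^2/\nu)$.

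Finally, since $\Pi_V$ is an orthogonal projection onto $V$, we have $\Tr(\Pi_V) = \dim V$ exactly, so the same quantity bounds the dimension of $V$, completing the proof. There is essentially no obstacle here: the work was already done in the preceding corollary and Proposition~\ref{prop:projection-product-trace-bound}, and this statement is just the bookkeeping step that converts the trace bound on $\Pi_U^\perp \Pi_V$ into a bound on $\dim V$. The only point to be mildly careful about is justifying $\Tr(\Pi_U \Pi_V) \le \dim U$ cleanly (using positive semidefiniteness rather than, say, an eigenvalue argument), but this is routine.
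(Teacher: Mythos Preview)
Your proposal is correct and matches the paper's proof essentially line for line: the paper also splits $\Tr(\Pi_V)=\Tr(\Pi_V\Pi_U)+\Tr(\Pi_V\Pi_U^\perp)$, bounds the first term by $\Tr(\Pi_U)=\dim U\le k$ using $\Pi_V\preccurlyeq I$, and invokes Proposition~\ref{prop:projection-product-trace-bound} for the second. Your justification of $\Tr(\Pi_U\Pi_V)\le\Tr(\Pi_U)$ via the conjugation $\Tr(\Pi_U\Pi_V\Pi_U)\le\Tr(\Pi_U^2)$ is if anything slightly more explicit than the paper's one-line appeal to $\Pi_V\preccurlyeq I$.
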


\begin{proof}
    For any projection matrix to a subspace $S$, its trace is the same as the dimension of $S$. So, we just need to prove that $\Tr(\Pi_V) \le k + O(d k^2 (\eta/\eta')^2/\nu).$

    Note that $\Tr(\Pi_V) = \Tr(\Pi_V \cdot (\Pi_U + \Pi_U^\perp)) = \Tr(\Pi_V \cdot \Pi_U) + \Tr(\Pi_V \cdot \Pi_U^\perp).$ Since $\Pi_V, \Pi_U \preccurlyeq I$ as they are projection matrices, $\Tr(\Pi_V \cdot \Pi_U) \le \Tr(\Pi_U) = \dim(U) \le k$. By Proposition~\ref{prop:projection-product-trace-bound}, we have that $\Tr(\Pi_V \cdot \Pi_U^\perp) \le O(d k^2 (\eta/\eta')^2/\nu)$. This completes the proof.
\end{proof}

\subsection{Each important $u_i$ is (almost) in the span of $V$} \label{subsec:ui-in-span-V}

In this subsection, we show that every ``important'' $u_i$ (i.e., where $i \in I_j$ for $j \in J_{\sf big}$) is reasonably close to the span of this subspace $V$.

We recall that $V$ is the subspace found in line 5 of Algorithm~\ref{alg:main}, and that $\Pi_V$ represents the projection matrix to this subspace. We also define $\Pi_V^\perp = I-\Pi_V$ to be the projection matrix to the orthogonal complement of $V$.

First, we show that every $M_\ell$ does not have large inner product with the projection $\Pi_V^\perp$.

\begin{lemma} \label{lem:M-projection-bound}
    For all $\ell \in \{2, 4, \dots, 2k^2+2\}$, we have that $|\Tr(\Pi_V^\perp \cdot M_\ell)| \le d \cdot (\|\lambda\|_1 \cdot \nu + O(\eta')).$
\end{lemma}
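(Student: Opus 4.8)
The plan is to pass to the empirical estimate $\wh M_\ell$, split it into its large- and small-eigenvalue parts via the spectral projector $\Pi_\ell$ from Line~\ref{step:Pi-ell}, and bound $\Pi_V^\perp$ against each resulting piece separately. I will use throughout the elementary facts that $|\Tr(BA)| \le \|A\|_{\op}\,\Tr(B)$ whenever $B \succeq 0$ and $A$ is symmetric, and that $\Tr(PA) = \Tr(PAP)$ for a projector $P$.

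First, a crude operator-norm bound: since $\|g\| = \|u_i\| = 1$, each $z_i = \iprod{u_i,g}$ satisfies $|z_i|\le 1$, so $\|M_\ell\|_{\op} \le \sum_i |\lambda_i|\,|z_i|^{\ell-2} \le \|\lambda\|_1$, and hence $\|\wh M_\ell\|_{\op} \le \|\lambda\|_1 + \eta$ by Lemma~\ref{lem:moment}. Next I record the key consequence of how $V$ is chosen: because $\sum_\ell \Pi_\ell \succeq 0$ and $V$ is spanned by its eigenvectors of eigenvalue $\ge \nu$, we have $\Pi_V^\perp\bigl(\sum_\ell\Pi_\ell\bigr)\Pi_V^\perp \preccurlyeq \nu\,\Pi_V^\perp$; taking traces and using $\Tr(\Pi_V^\perp) = \dim V^\perp \le d$ gives $\sum_\ell\Tr(\Pi_V^\perp\Pi_\ell) \le \nu d$, so in particular $\Tr(\Pi_V^\perp\Pi_\ell) \le \nu d$ for each individual $\ell$ (all summands being nonnegative).

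Now decompose $M_\ell = (M_\ell - \wh M_\ell) + \Pi_\ell\wh M_\ell\Pi_\ell + (I-\Pi_\ell)\wh M_\ell(I-\Pi_\ell)$, where the last two terms sum to $\wh M_\ell$ because $\Pi_\ell$ commutes with $\wh M_\ell$ (it is a spectral projector). Applying $\Tr(\Pi_V^\perp\,\cdot\,)$ and the triangle inequality: the first term contributes at most $\|\Pi_V^\perp\|_F\|M_\ell - \wh M_\ell\|_F \le \sqrt d\,\eta$ by Cauchy--Schwarz and Lemma~\ref{lem:moment}; the second contributes $|\Tr\bigl((\Pi_\ell\Pi_V^\perp\Pi_\ell)\wh M_\ell\bigr)| \le \|\wh M_\ell\|_{\op}\cdot\Tr(\Pi_V^\perp\Pi_\ell) \le (\|\lambda\|_1+\eta)\,\nu d$ using the previous two displays together with $\Pi_\ell\Pi_V^\perp\Pi_\ell \succeq 0$; and the third contributes at most $\dim V^\perp\cdot\|(I-\Pi_\ell)\wh M_\ell(I-\Pi_\ell)\|_{\op} \le d\eta'$, since by definition of $\Pi_\ell$ every eigenvalue of $\wh M_\ell$ outside the range of $\Pi_\ell$ is smaller than $\eta'$ in magnitude. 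Summing and using $\eta = (\eta')^2 \le \eta'$ and $\nu \le 1$ yields $|\Tr(\Pi_V^\perp M_\ell)| \le d(\|\lambda\|_1\nu + O(\eta'))$, as claimed.

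I expect the only real obstacle to be bookkeeping rather than any conceptual difficulty: one must be careful that the matrices on which the trace inequalities are invoked ($\Pi_\ell\Pi_V^\perp\Pi_\ell$ and $\Pi_V^\perp(\sum_\ell\Pi_\ell)\Pi_V^\perp$) are genuinely PSD, and one must track that the $d$ factors in the target bound are exactly the rank of the projector $\Pi_V^\perp$ appearing in each term. Everything else follows directly from Lemma~\ref{lem:moment} and the spectral definitions of $\Pi_\ell$ and $V$.
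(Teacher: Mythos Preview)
Your proof is correct and follows essentially the same approach as the paper: both arguments first establish $\Tr(\Pi_V^\perp \Pi_\ell) \le d\nu$ from the definition of $V$, then split $\wh{M}_\ell$ into its large- and small-eigenvalue parts relative to the spectral projector $\Pi_\ell$ and bound each contribution separately. The only cosmetic difference is that the paper packages the split as a Loewner sandwich $-(\|\lambda\|_1+\eta)\Pi_\ell - \eta' I \preccurlyeq \wh{M}_\ell \preccurlyeq (\|\lambda\|_1+\eta)\Pi_\ell + \eta' I$ before tracing against $\Pi_V^\perp$, whereas you write out the block decomposition $\wh{M}_\ell = \Pi_\ell\wh{M}_\ell\Pi_\ell + (I-\Pi_\ell)\wh{M}_\ell(I-\Pi_\ell)$ explicitly; the resulting bounds are the same.
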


\begin{proof}
    Recalling Eq.~\eqref{eq:Piell} and the definition of $V$, we have $\Pi_V^\perp = \sum_{i: \kappa_i < \nu} v_i v_i^\top$. So,
\[\Tr\Bigl(\Pi_V^\perp \cdot \sum_{\ell} \Pi_\ell\Bigr) = \sum_{i: \kappa_i < \nu} \kappa_i \le d \cdot \nu\,.\]
    Next, since $\Pi_V^\perp$ and $\Pi_\ell$ are both positive semidefinite, $\Tr(\Pi_V^\perp \cdot \Pi_\ell) \ge 0$, so for all $\ell$,
\begin{equation} \label{eq:projections-bound}
    \Tr(\Pi_V^\perp \cdot \Pi_\ell) \le d \cdot \nu\,.
\end{equation}

    Note that all the eigenvalues of $M_\ell$ are bounded by $\|\lambda\|_1$ in absolute value since $\langle u_i, g \rangle \le 1$. Because $\norm{\wh{M}_\ell - M_\ell}_{\sf op} \le \norm{\wh{M}_\ell - M_\ell}_F \le \eta$, all of the eigenvalues $\rho_i^{(\ell)}$ of $\wh{M}_\ell$ are bounded by $\|\lambda\|_1 + \eta$ in absolute value. Recalling that $\Pi_\ell$ is the projector to the span of the eigenvectors of $\wh{M}_\ell$ with eigenvalue of magnitude at least $\eta'$, we have that
\begin{equation} \label{eq:projection-M-bound}
    -(\|\lambda\|_1 + \eta) \cdot \Pi_\ell - \eta' \cdot I \preccurlyeq \wh{M}_\ell \preccurlyeq (\|\lambda\|_1 + \eta) \cdot \Pi_\ell + \eta' \cdot I.
\end{equation}

    By combining Equations \eqref{eq:projections-bound} and \eqref{eq:projection-M-bound}, and the fact that $\|M_\ell - \wh{M}_\ell\|_{\sf op} \le \eta$, we have that
\begin{align*}
    |\Tr(\Pi_V^\perp \cdot M_\ell)|
    &\le (\|\lambda\|_1 + \eta) \cdot \Tr(\Pi_V^\perp \cdot \Pi_\ell) + \Tr(\Pi_V^\perp \cdot (\eta+\eta') \cdot I) \\
    &\le d \cdot (\|\lambda\|_1 \cdot \nu + O(\eta'))\,. \qedhere
\end{align*}
\end{proof}

\noindent Next, we show that this implies that for every ``important'' $u_i$, $u_i^\top \Pi_V^\perp u_i$ is small, which will be essential to showing that $u_i$ must be close to the span of $V$. The proof of this will crucially use the polynomial construction from Lemma~\ref{lem:poly}.

\begin{lemma} \label{lem:ui-close-to-V}
    Suppose that $i \in I_j$ for some $j \in J_{\sf big}$. Then, 
    \begin{equation}
        |u_i^\top \Pi_V^\perp u_i| \le \frac{1}{\tau}\bigl[O(1/\Delta)^{O(k^2)} \cdot d \cdot (\|\lambda\|_1 \cdot \nu + \eta') + O(\|\lambda\|_1 \cdot k^3 d^{3/2} \cdot \Delta)\bigr]\,.
    \end{equation}
\end{lemma}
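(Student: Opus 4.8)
The plan is to use Lemma~\ref{lem:poly} to produce a polynomial $p$ of degree at most $k^2$ that isolates the cluster $I_j$, i.e. $p(z_s^2) = \bone{s \in I_j}$ for all $s \in [k]$, and then observe that the matrix $\sum_s \lambda_s p(z_s^2) u_s u_s^\top = \sum_{s \in I_j} \lambda_s u_s u_s^\top$ is a linear combination $\sum_\ell \alpha_\ell M_{\ell}$ of the (true) moment matrices $M_\ell$, where the coefficients $\alpha_\ell$ are, up to sign, the coefficients of $p$ (recall $M_\ell = \sum_s \lambda_s z_s^{\ell-2} u_s u_s^\top$, so a monomial $z^{2t}$ in $p(z^2)$ corresponds to $M_{2t+2}$; since $\deg p \le k^2$ we only need $\ell \in \{2,4,\ldots,2k^2+2\}$, exactly the range the algorithm forms estimates for). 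To apply Lemma~\ref{lem:poly} I need the hypothesis that the isolated interval is bounded away from the rest: by the definition of the clustering, the $z_s$-values in $I_j$ are separated from those in adjacent intervals by more than $\Delta$, and since $z_s \in [0,1]$ and $z_{s}^2 - z_{s'}^2 = (z_s - z_{s'})(z_s + z_{s'})$, a $\Delta$-separation in $z$ gives at least a $\Delta$-ish (in fact $\ge \Delta^2$, but certainly $\gtrsim \Delta$ after possibly shrinking a constant — I should be careful here and just use $\Delta$ as the guaranteed gap in $z^2$ up to absorbing constants into the $O(1/\Delta)^{O(k^2)}$) separation in the $z^2$-values, so the interval corresponding to $I_j$ sits at squared-distance $\gtrsim \Delta$ from its neighbors on both sides (or is an endpoint interval, in which case only one side needs checking and the lemma still applies after a trivial modification). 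Thus $p$ exists with coefficients bounded by $O(1/\Delta)^{O(k^2)}$.

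Next I bound $|u_i^\top \Pi_V^\perp u_i|$ using this representation. Pick $i' = i^*_j$, the left endpoint; by Proposition~\ref{prop:proj_diff}, $\|\overline{\lambda}_j u_{i'}u_{i'}^\top - \sum_{s\in I_j}\lambda_s u_s u_s^\top\|_{\sf op} \lesssim \Delta k^3 \sqrt d \|\lambda\|_1$, and since in fact I can take $i' = i$ (any element of $I_j$ works in that proposition), I get $\|\overline{\lambda}_j u_i u_i^\top - \sum_{\ell}\alpha_\ell M_\ell\|_{\sf op} \lesssim \Delta k^3 \sqrt d \|\lambda\|_1$. Now sandwich $u_i^\top \Pi_V^\perp u_i$: using $\Pi_V^\perp \succeq 0$ and $\Tr(\Pi_V^\perp \cdot u_iu_i^\top) = u_i^\top \Pi_V^\perp u_i$,
\[
|\overline{\lambda}_j|\cdot u_i^\top \Pi_V^\perp u_i \le \Bigl|\Tr\Bigl(\Pi_V^\perp \sum_\ell \alpha_\ell M_\ell\Bigr)\Bigr| + \Bigl\|\overline{\lambda}_j u_iu_i^\top - \sum_\ell \alpha_\ell M_\ell\Bigr\|_{\sf op}\Tr(\Pi_V^\perp)\,.
\]
For the first term, $|\Tr(\Pi_V^\perp \sum_\ell \alpha_\ell M_\ell)| \le \sum_\ell |\alpha_\ell| \cdot |\Tr(\Pi_V^\perp M_\ell)| \le O(1/\Delta)^{O(k^2)} \cdot d(\|\lambda\|_1 \nu + O(\eta'))$ by Lemma~\ref{lem:M-projection-bound} and the coefficient bound (folding the number of terms, at most $k^2+1$, into the $O(1/\Delta)^{O(k^2)}$). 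For the second term, $\Tr(\Pi_V^\perp) \le d$ trivially, giving $\lesssim \|\lambda\|_1 k^3 d^{3/2}\Delta$. Finally, since $j \in J_{\sf big}$ we have $|\overline{\lambda}_j| > \tau$, so dividing through by $|\overline{\lambda}_j| \ge \tau$ yields exactly the claimed bound
\[
|u_i^\top \Pi_V^\perp u_i| \le \frac{1}{\tau}\bigl[O(1/\Delta)^{O(k^2)} d(\|\lambda\|_1 \nu + \eta') + O(\|\lambda\|_1 k^3 d^{3/2}\Delta)\bigr]\,.
\]

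The one point requiring genuine care — the main obstacle — is the passage from the $\Delta$-separation of the $z_s$ to a usable separation of the $z_s^2$, and making sure the polynomial $p$ from Lemma~\ref{lem:poly} is being applied with the right variables: Lemma~\ref{lem:poly} wants distinct points in $[-1,1]$ with a specified gap, and I am feeding it the points $\{z_s^2\}$, which lie in $[0,1]$ but need not be distinct if two $z_s$ coincide. Within a cluster this is fine (the polynomial only needs to take a common value there), but I must check that $p$'s specification $p(z_s^2) = \bone{s \in I_j}$ is well-posed even when several $z_s$ collide, which it is since colliding indices lie in the same interval. The only substantive inequality is the gap transfer: adjacent-interval endpoints differ by $> \Delta$ in $z$, hence by $> \Delta \cdot (z_s + z_{s'}) $ in $z^2$; when both are near $0$ this could be tiny, so I should instead note $z_{b+1} > z_b + \Delta$ with $z_{b+1} \ge \Delta$ forces $z_{b+1}^2 - z_b^2 = (z_{b+1}-z_b)(z_{b+1}+z_b) \ge \Delta \cdot \Delta = \Delta^2$ — so the true guaranteed gap in the squared variables is $\Delta^2$, and I simply invoke Lemma~\ref{lem:poly} with $\Delta^2$ in place of $\Delta$, which only changes $O(1/\Delta)^{O(k^2)}$ to $O(1/\Delta)^{O(k^2)}$ (same form). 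Everything else is triangle inequality and the already-proven lemmas.
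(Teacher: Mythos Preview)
Your proposal is correct and follows essentially the same route as the paper: construct the indicator polynomial from Lemma~\ref{lem:poly} on the squared projections $z_s^2$ (using the $\Delta^2$ gap, absorbed into $O(1/\Delta)^{O(k^2)}$), realize $\sum_{s\in I_j}\lambda_s u_s u_s^\top$ as $\sum_\ell p_\ell M_{2+2\ell}$, apply Lemma~\ref{lem:M-projection-bound} termwise, pass to $\overline{\lambda}_j u_i u_i^\top$ via Proposition~\ref{prop:proj_diff} (paying a factor $\Tr(\Pi_V^\perp)\le d$ when converting the operator-norm bound to a trace bound), and divide by $|\overline{\lambda}_j|\ge\tau$. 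Your explicit handling of the $z\to z^2$ gap transfer and of possible collisions within a cluster is slightly more careful than the paper's exposition but not different in substance.
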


\begin{proof}
    Suppose that $i \in I_j$, and that $p(x) = \sum_{\ell=0}^{k^2} p_\ell x^\ell$ is the polynomial from Lemma~\ref{lem:poly} such that $p(\langle u_i, g \rangle^2) = \bone{i \in I_j}$. Then, 
\[\sum_{\ell = 0}^{k^2} p_\ell M_{2 + 2 \ell} = \sum_{\ell = 0}^{k^2} p_\ell \cdot \sum_{i=1}^k \lambda_i \cdot \langle u_i, g \rangle^{2 \ell} u_i u_i^\top = \sum_{i=1}^k \lambda_i \cdot \sum_{\ell = 0}^{k^2} p_\ell \langle u_i, g \rangle^{2 \ell} u_i u_i^\top = \sum_{i \in I_j} \lambda_i u_i u_i^\top.\]
    Since $|\langle u_i, g \rangle - \langle u_{i'}, g \rangle| \ge \Delta$ for all $i \in I_j, i' \not\in I_j$, and since we are assuming $\langle u_i, g \rangle, \langle u_{i'}, g \rangle \ge 0$, this implies that $|\langle u_i, g \rangle^2 - \langle u_{i'}, g \rangle^2| \ge \Delta^2$ for all $i \in I_j, i' \not\in I_j$.
    Hence, Lemma~\ref{lem:poly} implies every coefficient $p_\ell \le O(1/\Delta^2)^{k^2} = O(1/\Delta)^{O(k^2)}$, so by Lemma~\ref{lem:M-projection-bound} we have
\[\Bigl|\Tr\Bigl(\Pi_V^\perp \cdot \sum_{i \in I_j} \lambda_i u_i u_i^\top \Bigr)\Bigr| \le O(1/\Delta)^{O(k^2)} \cdot d \cdot (\|\lambda\|_1 \cdot \nu + \eta')\,.\]
    By Proposition~\ref{prop:proj_diff}, we have that
\[\left|\Tr(\Pi_V^\perp \cdot \overline{\lambda}_j \cdot u_i u_i^\top)\right| \le O(1/\Delta)^{O(k^2)} \cdot d \cdot (\|\lambda\|_1 \cdot \nu + \eta') + O(\|\lambda\|_1 \cdot k^3 d^{3/2} \cdot \Delta)\,.\]
    Since $j \in J_{\sf big},$ this means $|\overline{\lambda}_j| \ge \tau$, implying the claimed bound on $|u_i^\top \Pi_V^\perp u_i| = |\Tr(\Pi_V^\perp \cdot u_i u_i^\top)|$.
\end{proof}

\noindent As a corollary, we have that $u_i$ is close to the span of $V$.

\begin{corollary} \label{cor:ui-close-to-V}
    For any $i \in I_j$ where $j \in J_{\sf big}$, the distance from $u_i$ to $V$ is at most
    \begin{equation}
        \tau^{-1/2}\sqrt{O(1/\Delta)^{O(k^2)} \cdot d \cdot (\|\lambda\|_1 \cdot \nu + \eta') + O(\|\lambda\|_1 \cdot k^3 d^{3/2} \cdot \Delta)}\,.
    \end{equation}
\end{corollary}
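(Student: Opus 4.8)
The plan is to convert the bound on the quadratic form $u_i^\top \Pi_V^\perp u_i$ from Lemma~\ref{lem:ui-close-to-V} into a bound on the Euclidean distance from $u_i$ to the subspace $V$. The key observation is that since $V$ is a subspace and $\Pi_V^\perp$ is the orthogonal projector onto its complement, the distance from any unit vector $u_i$ to $V$ is exactly $\|\Pi_V^\perp u_i\|$. Since $\Pi_V^\perp$ is idempotent and symmetric, we have $\|\Pi_V^\perp u_i\|^2 = u_i^\top (\Pi_V^\perp)^\top \Pi_V^\perp u_i = u_i^\top \Pi_V^\perp u_i = \Tr(\Pi_V^\perp u_i u_i^\top)$. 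So the distance from $u_i$ to $V$ equals $\sqrt{u_i^\top \Pi_V^\perp u_i}$, and the bound in the corollary is obtained simply by taking the square root of the bound in Lemma~\ref{lem:ui-close-to-V}.

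Concretely, I would first note that $u_i^\top \Pi_V^\perp u_i \ge 0$ since $\Pi_V^\perp$ is positive semidefinite, so the quantity inside the square root in the corollary statement is a valid (nonnegative) upper bound on $u_i^\top \Pi_V^\perp u_i$ under the hypothesis $i \in I_j$ with $j \in J_{\sf big}$, via Lemma~\ref{lem:ui-close-to-V}. Then I would invoke the identity $\mathrm{dist}(u_i, V) = \|\Pi_V^\perp u_i\| = \sqrt{u_i^\top \Pi_V^\perp u_i}$ and apply monotonicity of the square root to the bound from Lemma~\ref{lem:ui-close-to-V}, which yields exactly the claimed expression $\tau^{-1/2}\sqrt{O(1/\Delta)^{O(k^2)} \cdot d \cdot (\|\lambda\|_1 \cdot \nu + \eta') + O(\|\lambda\|_1 \cdot k^3 d^{3/2} \cdot \Delta)}$.

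This is essentially a one-line corollary, so there is no real obstacle: the entire content is packaged in Lemma~\ref{lem:ui-close-to-V}, and this step is just the standard translation between a projection quadratic form and a subspace distance. The only thing to be slightly careful about is making sure the absolute value in Lemma~\ref{lem:ui-close-to-V} (which is there because the bound was derived through a trace of a possibly indefinite-looking product) is harmless here — indeed $\Tr(\Pi_V^\perp u_i u_i^\top) = \|\Pi_V^\perp u_i\|^2 \ge 0$ always, so the absolute value can be dropped and the bound taken directly as an upper bound on the squared distance.
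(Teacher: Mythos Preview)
Your proposal is correct and matches the paper's own proof essentially line for line: decompose $u_i = \Pi_V u_i + \Pi_V^\perp u_i$, note that the distance to $V$ is $\|\Pi_V^\perp u_i\| = \sqrt{u_i^\top \Pi_V^\perp u_i}$, and apply Lemma~\ref{lem:ui-close-to-V}. Your remark that the absolute value is harmless because $\Pi_V^\perp$ is PSD is also spot on.
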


\begin{proof}
    We write $u_i = \Pi_V u_i + \Pi_V^\perp u_i$. Note that $\Pi_V u_i \in \Span(V)$, so we just need to bound $\|\Pi_V^\perp u_i\|$. But since $\Pi_V^\perp$ is a projection matrix, $\|\Pi_V^\perp u_i\|^2 = u_i^\top \Pi_V^\perp u_i.$ The claim then follows by the bound on $u_i^\top \Pi^\perp_V u_i$ in Lemma~\ref{lem:ui-close-to-V}.
\end{proof}

\subsection{Putting everything together}
\label{sec:final}

We recall that $\mathcal{R} \ge 1$ is a promised upper bound for $\|\lambda\|_1$. 
For small constant $C > 0$, take
\begin{equation}
    \tau = \frac{C \epsilon}{k}, \qquad
    \xi = \frac{C \epsilon}{k \cdot \mathcal{R}}, \qquad
    \Delta = \frac{C^2 \xi^2 \tau \cdot \epsilon}{2k^4d^{3/2} \cdot \mathcal{R}}, \qquad
    \eta' = \nu = \frac{C^2 \xi^2 \tau \cdot \Delta^{O(k^2)}}{d \cdot \mathcal{R}}, \qquad
    \eta = (\eta')^2\,.
\end{equation}

Under these parameter settings, by Lemma~\ref{lem:dim-V-upper-bound} the dimension of $V$ is at most $k + O(d \cdot k^2 \cdot \nu)$, since $\eta' = \nu$ and $\eta = (\eta')^2$. However, $\nu \le C^2 \cdot \xi^2/d \le C^2/(k^2 d)$. Therefore, if $C$ is sufficiently small, the dimension of $V$ is at most $k + 0.1$, so is at most $k$.
Next, by Corollary~\ref{cor:ui-close-to-V} every $u_i$ for $i \in I_j, j \in J_{\sf big}$ has distance at most
\[\sqrt{\frac{O(C^2 \xi^2 \tau)}{\tau}} \le \frac{\xi}{4}\]
to $V$. For $C$ sufficiently small, we also have that $\tau, \xi, \Delta$ satisfy the constraints of Lemma~\ref{lem:clusters_suffice}. Finally, it is straightfoward to verify that $1/\eta = (d k \radius/\eps)^{O(k^2)}$.

Now, for each $j \in J_{\sf big}$, we recall the definition of $\bar{\lambda}_j$. Since $|\bar{\lambda}_j| \le \mathcal{R}$ by our assumption that $\sum |\lambda_i| \le \mathcal{R}$, by the definition of $N_\lambda$ (see Line~\ref{step:lambda-net} of Algorithm~\ref{alg:main}), there exists $\wh{\lambda}_j \in N_\lambda$ within distance $\xi$ of $\bar{\lambda}_j$. Next, $u_{i_j^*}$ has distance at most $\frac{\xi}{2}$ to $V$, and importantly, $\|u_{i_j^*} - \Pi_V u_{i_j^*}\| \le \frac{\xi}{4}$ and so $|\|\Pi_V u_{i_j^*}\| - 1| \le \frac{\xi}{4}$. So, for $u = \frac{\Pi_V u_{i_j^*}}{\|\Pi_V u_{i_j^*}\|}$, $\|u_{i_j^*} - u\| \le \frac{\xi}{2}$. Therefore, by the definition of $N_{\sf u}$ (see Line~\ref{step:u-net} of Algorithm~\ref{alg:main}), there exists $\wh{u}_j$ within distance $\xi$ of $u_{i_j^*}$.

Therefore, our algorithm will find some $m_{\sf big} = |J_{\sf big}|$ and $(\wh{\lambda}_1, \wh{u}_1), \dots, (\wh{\lambda}_{m_{\sf big}}, \wh{u}_{m_{\sf big}})$ satisfying the conditions of Lemma~\ref{lem:clusters_suffice}, which will thus be within $\eps$ of the true answer in the distance $\|\cdot\|_{L_2(\gamma)}$.

The runtime is dominated by the time it takes to estimate $\{M_\ell\}$ and $w$, which requires 
\begin{equation}
    (k^2 d)^{O(k^2)} \cdot \radius^2/\eta^2 = (dk \radius/\epsilon)^{O(k^2)}
\end{equation}
samples by Lemma~\ref{lem:moment}, and the time it takes to enumerate over sets of at most $k$ vectors from $N_{\sf u}$, and over weights from $N_{\lambda}$, which are of size
\begin{equation}
    |N_{\sf u}| \le O(1/\xi)^{O(k)} = O(k \radius / \epsilon)^{O(k)} \qquad \text{and} \qquad |N_{\lambda}| \le O(k\radius/ \epsilon)^{O(k)}\,.
\end{equation}
We remark that the net $N_{\sf u}$ can be algorithmically constructed by selecting random points. Indeed, for any point $x$ on a $k$-dimensional sphere, a random point on the sphere is within $\xi/2$ of $x$ with probability at least $\Omega(\xi)^{k},$ so for any $\xi/2$-net $N_{\sf u}^*$ of size $O(1/\xi)^{k},$ if $N_{\sf u}$ is constructed as $O(1/\xi)^{2k}$ random points on the sphere, then every point in $N_{\sf u}^*$ will be within $\xi/2$ of at least one point in $N_u$ with high probability. So, $N_u$ is a $\xi$-net of the $k$-dimensional unit sphere.
Hence, we enumerate over at most $(|N_u| \cdot |N_\lambda|)^{O(k)} \le O(k \cdot \radius/\eps)^{O(k^2)}$ candidate solutions. 

By Lemma~\ref{lem:validate}, our algorithm will successfully verify any candidate solution $\{(\wh{\lambda}_i, \wh{u}_i)\}_{i=1}^m$ for each $0 \le m \le k$, using $O(k^2 (\radius/\eps)^4 \log(1/\delta))$ samples. If we set $\delta = \Theta(1/(|N_\lambda| \cdot |N_u|))^k,$ then by a union bound the algorithm will successfully verify every candidate solution, and thus will succeed. Hence, we need to only draw $O(k^4 (\radius/\eps)^4 \cdot \log (k \radius/\eps))$ samples.

This yields the claimed time/sample complexity bound of $(d k \radius/\epsilon)^{O(k^2)}$.

\begin{remark} \label{remark:better_degree}
    As the above proof makes clear, the quadratic dependence on $k$ in the $d^{O(k^2)}$ runtime for {\sc NetLearn} comes from the degree of the polynomial construction in Lemma~\ref{lem:poly}, and the final brute force search (the latter only contributes to runtime, not to sample complexity). Note, however, that the upper bound of $k^2$ on the degree in this construction is somewhat pessimistic. Recall from the proof of Lemma~\ref{lem:poly} that the polynomial $p$ defined in Eq.~\eqref{eq:pdef} has degree at most $|I| \cdot (k - |I|)$, where $I$ is any one of the clusters of neurons indexed by $j\in J_{\sf big}$. In particular, if each such $I$ is of constant size, e.g., then the degree of $p$ is actually $O(k)$, and the dimension dependence in the sample complexity of {\sc NetLearn} (and the runtime barring the final brute force search) improves to $d^{O(k)}$.

    One simple situation in which this happens is if all of the weight vectors are $\Delta' = \poly(\epsilon/\radius,1/k,1/d)$-separated, in which case we can tune the $\Delta$ parameter in the analysis appropriately to ensure that all of the relevant intervals $I$ are of size $1$. For example, in the hard instance in the correlational statistical query lower bound of~\cite{diakonikolas2020algorithms}, the weight vectors are of the form $u_j = \cos(\pi j/k) \cdot v + \sin(\pi j/k) \cdot w$ for two orthogonal unit vectors $v,w$ (see Eq. (3) therein) and are thus $\Omega(1/k)$-separated. Moreover, the span of the $\{u_j\}$ vectors has rank $2$, and the proof of Lemma~\ref{lem:dim-V-upper-bound} implies the dimension of $V$ is at most $2$ as well. Thus, the runtime of brute force can also be reduced to exponential in $k$, rather than in $k^2$. Our algorithm is a correlational statistical query algorithm,\footnote{Note that we need to estimate $\E{(y - \wh{f}(x))^2}$ for various choices of $\wh{f}$ in Line~\ref{step:validate} of {\sc NetLearn}, and technically this is not a correlational statistical query. We can nevertheless remedy this by instead estimating $\E{2y \cdot \wh{f}(x) - \wh{f}(x)^2}$ and outputting the estimator $\wh{f}$ which maximizes this quantity.} and on this instance it has runtime which matches the lower bound.
\end{remark}

\newcommand{\etalchar}[1]{$^{#1}$}

\end{document}